\algrenewcommand\algorithmicthen{\relax}
\algrenewcommand\algorithmicdo{\relax}
\newtheorem{theorem}{Theorem}
\newtheorem{lemma}[theorem]{Lemma}
\theoremstyle{definition}
\newtheorem{remark}[theorem]{Remark}
\DeclareMathOperator{\Loss}{Loss}
\newcommand{\R}{\mathbb{R}}
\newcommand*{\ddd}{\mathrm{d}}
\title{Protected probabilistic classification}
\author{Vladimir Vovk, Ivan Petej, and Alex Gammerman}
\begin{document}
\maketitle

\begin{abstract}
  This paper proposes a way of protecting probabilistic prediction models against changes in the data distribution,
  concentrating on the case of classification and paying particular attention to binary classification.
  This is important in applications of machine learning,
  where the quality of a trained prediction algorithm may drop significantly in the process of its exploitation.
  Our techniques are based on recent work on conformal test martingales
  and older work on prediction with expert advice, namely tracking the best expert.

   The version of this paper at \url{http://alrw.net} (Working Paper 35)
   is updated most often and is accompanied with Python code\label{p:code}.
\end{abstract}

\section{Introduction}
\label{sec:introduction}

A common problem in applications of machine learning is that, soon after a prediction algorithm is trained,
the distribution of the data may change, and the prediction algorithm may need to be retrained.
There are efficient ways of online detection of a change in distribution,
such as using conformal test martingales \cite{Vovk:2021},
but there are inevitably awkward gaps between the change in distribution and its detection
and between the detection of the change and the deployment of a retrained prediction algorithm.

We will refer to the trained prediction algorithm as our \emph{prediction model}.
This paper proposes a way of preventing a catastrophic drop in the quality of the prediction model
when the data distribution changes.
Given a prediction model, our procedure gives a protected prediction model
that is more robust to changes in the data distribution.
To use Anscombe's \cite{Anscombe:1960} insurance metaphor
(repeatedly used already in \cite{Huber/Ronchetti:2009}),
our procedure provides an insurance policy against such changes.
The main desiderata for such a policy are its low price and its efficiency.

The case of regression was discussed in an earlier paper \cite{Vovk:arXiv2105}.
In this paper we concentrate on the simpler case of classification,
and first of all binary classification.
We will often assume that the label space is $\{0,1\}$.
Suppose we are given a predictive system that maps past data and an object $x$
with an unknown label $y\in\{0,1\}$
to a number $p\in[0,1]$, interpreted as the predicted probability that $y=1$.
We will refer to it as our \emph{base predictive system}.
In this paper we are mostly interested in the case where the base predictive system
is a prediction model obtained by training a prediction algorithm,
and so the predicted probability that $y=1$ depends only on the object $x$,
but allowing the dependence on the past data  does not complicate the exposition.

We will be interested in two seemingly different questions about the base predictive system:
\begin{description}
\item[Online testing]
  Can we gamble successfully against the base predictive system
  (at the odds determined by its predicted probabilities)?
  We are interested in online testing \cite{Vovk:2021},
  i.e., in constructing test martingales (nonnegative martingales with initial value 1)
  with respect to the base predictive system
  that take large values on the actual sequence of observations.
\item[Online prediction]
  Can we improve the base predictive system,
  modifying its predictions $p_n$ to better predictions $p'_n$?
\end{description}
If the quality of online prediction is measured using the log-loss function \cite{Good:1952},
the difference between the two questions almost disappears,
as we will see in Sections~\ref{sec:prediction} and~\ref{sec:experimental}.

After discussing online testing in Section~\ref{sec:testing} and online prediction in Section~\ref{sec:prediction},
we will give an example of a theoretical performance guarantee for our prediction procedure
(an application of a known technique) in Section~\ref{sec:theory}.
In Section~\ref{sec:experimental} we report encouraging experimental results,
and Section~\ref{sec:conclusion} concludes.

Two appendixes contain proofs and further experimental results.
Our computer code for the experiments is released in the form of Jupyter notebooks
(see p.~\pageref{p:code}, the end of abstract).

\section{Testing predictions by betting}
\label{sec:testing}

We consider a potentially infinite sequence of \emph{actual observations} $z_1,z_2,\dots$,
each consisting of two components: $z_n=(x_n,y_n)$,
where $x_n\in\mathbf{X}$ is an \emph{object} chosen from an \emph{object space} $\mathbf{X}$,
and $y_n\in\{0,1\}$ is a binary label.
A \emph{predictive system} is a function that maps any object $x$
and any finite sequence of observations $z_1,\dots,z_i$ (intuitively, the past data) for any $i\in\{0,1,\dots\}$
to a number $p\in[0,1]$ (intuitively, the probability that the label of $x$ is 1).
Fix a \emph{base predictive system},
and let $p_1,p_2,\dots$ be its predictions for the actual observations:
$p_n$ is the prediction output by the base predictive system on $x_n$ and $z_1,\dots,z_{n-1}$;
it is interpreted as the predicted probability that $y_n=1$.
(We will not need any measurability assumptions;
in particular, $\mathbf{X}$ is not supposed to be a measurable space.)

In this paper we are mostly interested in the special case where the output $p$ of the base predictive system
depends only on $x$ and not on $z_1,\dots,z_i$.
In this case we will say that our predictive system is a \emph{prediction model}.
A typical way in which prediction models appear in machine learning is as result of training a prediction algorithm.
In Section~\ref{sec:experimental} we will only consider prediction models,
but for now we do not impose this restriction.

\begin{algorithm}[bt]
  \caption{Simple Jumper martingale ($(p_1,p_2,\dots)\mapsto(S_1,S_2,\dots)$)}
  \label{alg:SJ-test}
  \begin{algorithmic}[1]
    \State $C_{\theta}:=1_{\theta=\boldsymbol{0}}$ for all $\theta\in\Theta$\label{l:prior}
    \State $C:=1$
    \For{$n=1,2,\dots$:}
      \For{$\theta\in\Theta$:}
        $C_{\theta} := (1-J)C_{\theta} + (J/\left|\Theta\right|)C$\label{l:C-epsilon}
      \EndFor
      \For{$\theta\in\Theta$:}
          $C_{\theta} := C_{\theta} B_{f_{\theta}(p_n)}(\{y_n\}) / B_{p_n}(\{y_n\})$
      \EndFor
      \State $S_n := C := \sum_{\theta\in\Theta} C_{\theta}$
    \EndFor
  \end{algorithmic}
\end{algorithm}

Our first online testing procedure is given as Algorithm~\ref{alg:SJ-test},
where we use the notation $1_E$ to mean 1 if a property $E$ holds and 0 if not.
One of its two parameters is a finite non-empty family $f_{\theta}:[0,1]\to[0,1]$, $\theta\in\Theta$,
of \emph{calibrating functions}.
The intuition behind $f_{\theta}$ is that we are trying to improve the base predictions $p_n$,
or \emph{calibrate} them;
the idea is to use a new prediction $f_{\theta}(p_n)$ instead of $p_n$.
We assume that $\Theta$ contains a distinguished element $\boldsymbol{0}\in\Theta$
(used in line~\ref{l:prior} of Algorithm~\ref{alg:SJ-test}).

Perhaps the simplest choice (explored in Appendix~\ref{app:further})
is to use a finite subset of the family
\begin{equation}\label{eq:quadratic}
  f_{\theta}(p) := p + \theta p(1-p),
\end{equation}
where $\theta\in[-1,1]$, and $\theta=0$ is the distinguished element.
For $\theta>0$ we are correcting for the forecasts $p$ being underestimates of the true probability of 1,
while for $\theta<0$ we are correcting for $p$ being overestimates;
$f_0$ is the identity function (no correction).

We do not know in advance which $f_{\theta}$ will work best,
and moreover, it seems plausible that suitable values of $\theta$ will change over time.
Therefore, we use the idea of ``tracking the best expert'' \cite{Herbster/Warmuth:1998ML}.
Algorithm~\ref{alg:SJ-test} uses the notation $B_p$, $p\in[0,1]$,
for the Bernoulli distribution on $\{0,1\}$ with parameter $p$:
$B_p(\{1\})=p$.
To each sequence $\theta=(\theta_1,\theta_2,\dots)$ of elements of $\Theta$ corresponds
the \emph{elementary test martingale}
\begin{equation}\label{eq:el-test}
  \prod_{i=1}^n
  \frac{B_{f_{\theta_i}(p_i)}(\{y_i\})}{B_{p_i}(\{y_i\})},
  \quad
  n=0,1,\dots.
\end{equation}
The other parameter of the Simple Jumper martingale of Algorithm~\ref{alg:SJ-test} is $J\in[0,1]$, the \emph{jumping rate}.
This martingale is obtained by ``derandomizing'' (to use the terminology of \cite{Vovk:1999derandomizing}) the stochastic test martingale
corresponding to the probability measure $\mu$ on $(\theta_1,\theta_2,\dots)\in[0,1]^{\infty}$
defined as the probability distribution of the following Markov chain with state space $\Theta$.
The initial state $\theta_0$ (ignored by $\mu$) is $\boldsymbol{0}$ (line \ref{l:prior} of Algorithm~\ref{alg:SJ-test}),
and the transition function prescribes maintaining the same state with probability $1-J$
and, with probability $J$, choosing a new state from the uniform probability measure on $\Theta$
(line~\ref{l:C-epsilon}).
We will sometimes refer to it as the \emph{Simple Jumper Markov chain}.

We derandomize the stochastic test martingale by averaging,
\begin{equation*}
  S_n
  :=
  \int
  \prod_{i=1}^n
  \frac{B_{f_{\theta_i}(p_i)}(\{y_i\})}{B_{p_i}(\{y_i\})}
  \mu(\ddd\theta),
\end{equation*}
which gives us a deterministic test martingale.
This construction is standard in conformal testing \cite[Section 3]{Vovk:2021}.

In one respect the Simple Jumper martingale is not adaptive enough:
we assume that a suitable jumping rate $J$ is known in advance.
Instead, we will use the \emph{Composite Jumper} martingale
that depends on two parameters,
$\pi\in(0,1)$ and a finite set $\mathbf{J}$ of non-zero jumping rates.
It is defined to be the weighted average
\begin{equation}\label{eq:average}
  S_n
  :=
  \pi
  +
  \frac{1-\pi}{\left|\mathbf{J}\right|}
  \sum_{J\in\mathbf{J}}
  S^J_n,
\end{equation}
where $S^J$ is computed by Algorithm~\ref{alg:SJ-test} fed with $J$ as its parameter.

\begin{figure}
  \begin{center}
    \includegraphics[width=0.6\textwidth]{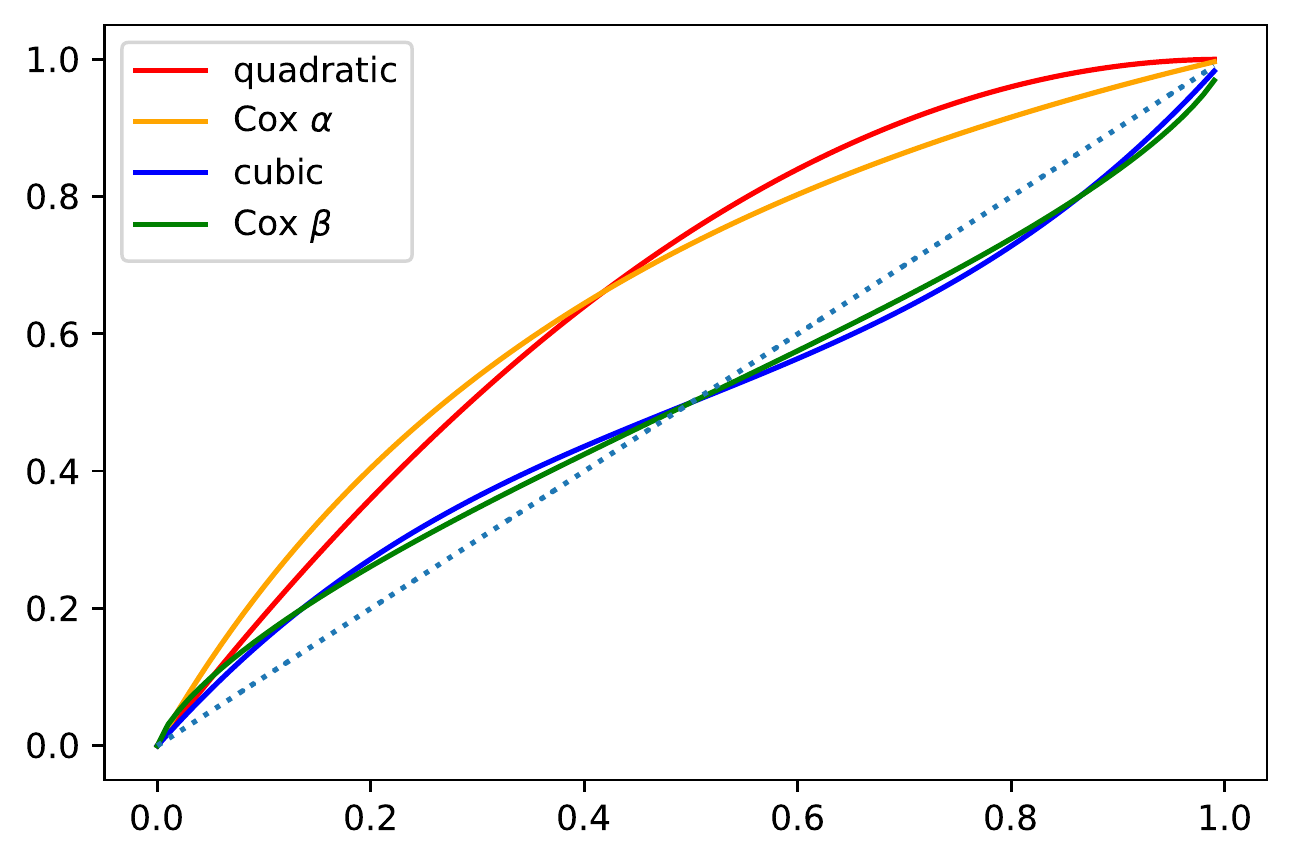}
  \end{center}
  \caption{Examples of calibration functions.}
  \label{fig:cal_functions}
\end{figure}

In Appendix~\ref{app:further} we will see that already the simple choice \eqref{eq:quadratic}
leads to very successful betting for popular benchmark datasets.
However, there are numerous other natural calibration functions,
some of which are shown in Figure~\ref{fig:cal_functions}.
The function in red is in the quadratic family \eqref{eq:quadratic} (and its parameter is $\theta=1$);
these functions are fully above, fully below, or (for $\theta=0$) situated on the bisector of the first quadrant
(shown as the dotted line).
In many situations other calibration functions will be more suitable.
For example, it is well known that untrained humans tend to be overconfident
\cite[Part VI, especially Chapter 22]{Kahneman/etal:1982}.
A possible calibration function correcting for overconfidence is the cubic function
\begin{equation}\label{eq:cubic}
  f_{a,b}(p) := p + a p(p-b)(p-1),
\end{equation}
where $(a,b)=\theta\in[0,1]^2$.
An example of such a function is shown in Figure~\ref{fig:cal_functions} in blue
(with parameters $a=1.5$ and $b=0.5$).
The meaning of the parameters is that $b$ is the value of $p$ (such as $0.5$) that we believe does not need correction,
and that $a$ indicates how aggressively we want to correct for overconfidence
($a<0$ meaning that in fact we are correcting for underconfidence).
If the predictor predicts a $p$ that is close to 0 or 1,
we correct for his overconfidence (assuming $a>0$) by moving $p$ towards the neutral value $b$.

In the experimental Section~\ref{sec:experimental}
we will use Cox's \cite[Section 3]{Cox:1958two} calibration functions
\begin{equation}\label{eq:Cox}
  f_{\alpha,\beta}(p)
  :=
  \frac{p^{\beta}\exp(\alpha)}{p^{\beta}\exp(\alpha)+(1-p)^{\beta}},
\end{equation}
where $(\alpha,\beta)=\theta\in\R^2$.
We obtain the identity function $f_{\alpha,\beta}(p)=p$ for $\alpha=0$ and $\beta=1$;
these are the ``neutral'' values, $\boldsymbol{0}=(0,1)\in\Theta$.
Cox starts his exposition in \cite[Section 3]{Cox:1958two} from the one-parameter subfamily
\begin{equation}\label{eq:Cox-beta}
  f_{\beta}(p)
  :=
  \frac{p^{\beta}}{p^{\beta}+(1-p)^{\beta}},
\end{equation}
where $\beta\in\R$,
obtained by fixing $\alpha:=0$.
We are mostly interested in $\beta>0$, although $\beta=0$ (when every $p$ is transformed to 0.5)
and $\beta<0$ (which reverses the order of the label probabilities) are also possible.
Similarly, we can set $\beta$ to its neutral value 1 obtaining another one-parameter subfamily,
\begin{equation}\label{eq:Cox-alpha}
  f_{\alpha}(p)
  :=
  \frac{p\exp(\alpha)}{p\exp(\alpha)+(1-p)}.
\end{equation}
An example of a function in the class~\eqref{eq:Cox-beta} is shown in Figure~\ref{fig:cal_functions} in green (with $\beta=0.75$);
the plot in orange shows a function in the class~\eqref{eq:Cox-alpha} (with $\alpha=1$).

\begin{remark}
  Cox's calibration functions look particularly natural (are linear functions)
  in terms of the log odds ratios (as presented in Cox \cite[Section 3]{Cox:1958two}).
  Namely, \eqref{eq:Cox} can be rewritten as
  \begin{equation}\label{eq:Cox-odds-ratio}
    \log\frac{f_{\alpha,\beta}(p)}{1-f_{\alpha,\beta}(p)}
    :=
    \alpha
    +
    \beta\log\frac{p}{1-p}.
  \end{equation}
\end{remark}

\subsection*{Multiclass case}

The advantage of the representations~\eqref{eq:Cox}, \eqref{eq:Cox-beta}, and~\eqref{eq:Cox-alpha}
over \eqref{eq:Cox-odds-ratio}
is that they are easy to extend to the multiclass case.
Let the label space be $\mathbf{Y}$ with $K:=\left|\mathbf{Y}\right|<\infty$;
therefore, each prediction is a set of nonnegative numbers $\mathbf{p}=(p_y\mid y\in\mathbf{Y})\in[0,1]^K$ summing to 1,
where $p_y$ is the predicted probability of $y$.
The calibration functions~\eqref{eq:Cox} become
\begin{equation}\label{eq:Cox-multiclass}
  f_{\alpha,\beta}(\mathbf{p})_y
  :=
  \frac{p^{\beta}_y\exp(\alpha(y))}{\sum_{y'\in\mathbf{Y}}p_{y'}^{\beta}\exp(\alpha(y'))},
\end{equation}
where $\alpha\in\R^{\mathbf{Y}}$ and $\beta\in\R$.
Formally, the number of parameters in \eqref{eq:Cox-multiclass} is $K+1$,
but the effective number of parameters is $K$
since the transformation~\eqref{eq:Cox-multiclass} does not change
when the same constant (positive or negative) is added to all $\alpha(y)$, $y\in\mathbf{Y}$.
The calibration functions~\eqref{eq:Cox-beta} become
\begin{equation*}
  f_{\beta}(\mathbf{p})_y
  :=
  \frac{p_y^{\beta}}{\sum_{y'\in\mathbf{Y}} p_{y'}^{\beta}},
\end{equation*}
where $\beta\in\R$.
Therefore, this family still depends on one real-valued parameter, $\beta$.

\section{Protecting prediction algorithms}
\label{sec:prediction}

For any predictive system,
we define its \emph{probability process} as a function mapping any finite sequence of observations
to the product $B_{p_1}(y_1)\cdots B_{p_n}(y_n)$
(i.e., the probability attached to this sequence by the predictive system),
where $n$ is the number of observations in the sequence, $y_1,\dots,y_n$ are their labels,
and $p_1,\dots,p_n$ are the predictions for those observations.
We regard the probability process as the capital process of a player playing an extremely challenging (definitely unfair) game:
his capital cannot go up, and for it not to go down he has to predict
with the probability measure concentrated on the true outcome.
The probability process of the base predictive system will be referred to as the \emph{base probability process}.

\begin{remark}
  The notion of a probability process is very similar to Cox's \cite{Cox:1975partial} notion of partial likelihood,
  but we cannot say that a probability process is partial in any sense (since it is not part of a fuller probability process:
  there is no probability measure on the objects \cite[Section 10.5]{Shafer/Vovk:2019}).
  In the absence of objects it becomes close to likelihood
  (however, unlike likelihood, it is not a function of any parameters).
  It is even closer to the notion of measure as used in the algorithmic theory of randomness:
  see, e.g., \cite[Definition 4.2.1]{Li/Vitanyi:2019}.
\end{remark}

\begin{algorithm}[bt]
  \caption{Composite Jumper predictor ($(p_1,p_2,\dots)\mapsto(p'_1,p'_2,\dots)$)}
  \label{alg:CJ-predictor}
  \begin{algorithmic}[1]
    \State $P:=\pi$\label{ln:prior-1}
    \State $A^J_{\theta}:=\frac{1-\pi}{\left|\mathbf{J}\right|}1_{\theta=0}$ for all $J\in\mathbf{J}$ and $\theta\in\Theta$\label{ln:prior-2}
    \For{$n=1,2,\dots$:}
      \For{$J\in\mathbf{J}$:}
        \State $A:=\sum_{\theta}A^J_{\theta}$
        \For{$\theta\in\Theta$:}
          $A^J_{\theta} := (1-J)A^J_{\theta} + A J/\left|\Theta\right|$\label{ln:recursion-1}
        \EndFor
      \EndFor
      \State $p'_n := p_n P + \sum_{J,\theta} f_{\theta}(p_n) A^J_{\theta}$\label{ln:result}
      \State $P := P B_{p_n}(\{y_n\})$\label{ln:recursion-2}
      \For{$J\in\mathbf{J}$ and $\theta\in\Theta$:}
        $A^J_{\theta} := A^J_{\theta} B_{f_{\theta}(p_n)}(\{y_n\})$\label{ln:recursion-3}
      \EndFor
      \State $C := P + \sum_{J,\theta} A^J_{\theta}$\label{ln:C}
      \State $P := P/C$\label{ln:recursion-4}
      \For{$J\in\mathbf{J}$ and $\theta\in\Theta$:}
        $A^J_{\theta} := A^J_{\theta} / C$\label{ln:recursion-5}
      \EndFor
    \EndFor
  \end{algorithmic}
\end{algorithm}

For simplicity, we will discuss only positive probability processes and martingales
(i.e., those that do not take zero values).
This will be sufficient for the considerations of Section~\ref{sec:experimental}.
Each test martingale with respect to the base predictive system
is the ratio of a probability process to the base probability process
(i.e., is a probability ratio process, familiar from sequential analysis),
and vice versa.
This establishes a bijection between test martingales and probability processes
(for a fixed base predictive system).
Algorithm~\ref{alg:CJ-predictor} is the predictive system
whose probability process corresponds to the test martingale of Algorithm~\ref{alg:SJ-test}
averaged as in \eqref{eq:average}.

Algorithm~\ref{alg:CJ-predictor} implements the Bayesian merging rule
and is a special case the Aggregating Algorithm (AA) \cite{Vovk:1999derandomizing}
corresponding to the \emph{log-loss function}
\begin{equation}\label{eq:log-loss}
  \lambda(y,p)
  :=
  \begin{cases}
    -\log p & \text{if $y=1$}\\
    -\log(1-p) & \text{if $y=0$},
  \end{cases}
\end{equation}
where $y\in\{0,1\}$ is the true label and $p\in[0,1]$ is its prediction
(the logarithm is typically natural,
but in Section~\ref{sec:experimental} we will consider decimal logarithms).
In the case where $\pi=0$ and $\left|\mathbf{J}\right|=1$,
it is also a special case of the Fixed Share algorithm of \cite{Herbster/Warmuth:1998ML}.

The AA is described in \cite[Section 2]{Vovk:1999derandomizing},
and in our case of the log-loss function
the optimal in a natural sense learning rate is $\eta:=1$,
and the AA coincides with the APA
(``Aggregating Pseudo-Algorithm'').
Analogously to \eqref{eq:el-test} but reflecting the operation of averaging \eqref{eq:average},
to each $J\in\mathbf{J}$ and each sequence $\boldsymbol{\theta}=(\theta_1,\theta_2,\dots)$
corresponds the \emph{elementary predictor} that outputs, at each step $n$,
\begin{equation*}
  p'_n
  :=
  f_{\theta_n}(p_n),
  \quad
  n=1,2,\dots,
\end{equation*}
as its prediction.
There is also the \emph{base} elementary predictor that just coincides with the base predictive system.
The prior probability measure on the elementary predictors is built on top of the Simple Jumper Markov chain
(described in the previous section) and taking the averaging \eqref{eq:average} into account:
\begin{itemize}
\item
  With probability $\pi$, we choose the base elementary predictor
  (which is our \emph{passive} elementary predictor).
\item
  Otherwise, we choose the jumping rate $J$ from the uniform probability measure on $\mathbf{J}$.
\item
  Having chosen $J$, we generate $\boldsymbol{\theta}:=(\theta_1,\theta_2,\dots)$ as in the previous section,
  which gives us the \emph{active} elementary predictor $(J,\boldsymbol{\theta})$.
\end{itemize}
This determines the prior distribution on the elementary predictors.

The variable $P$ in Algorithm~\ref{alg:CJ-predictor} holds the posterior weight of the passive elementary predictor,
and $A^J_{\theta}$ holds the total posterior weight of the elementary predictors $(J,\boldsymbol{\theta})$
that are in the state $\theta$
(i.e., $\theta_n=\theta$, where $\boldsymbol{\theta}=(\theta_1,\theta_2,\dots)$
and $n$ is the current step).
We initialize them to their prior weights (lines~\ref{ln:prior-1}--\ref{ln:prior-2}),
and the recursion is given
in lines~\ref{ln:recursion-1}, \ref{ln:recursion-2}, \ref{ln:recursion-3}, \ref{ln:recursion-4}, and~\ref{ln:recursion-5}.
Line~\ref{ln:recursion-1} corresponds to the transition function of the Simple Jumper Markov chain with the jumping rate $J$,
and lines~\ref{ln:recursion-2}--\ref{ln:recursion-3} are the Bayesian weight updates.
In line~\ref{ln:C} we compute the total posterior weight of the elementary predictors,
and in lines~\ref{ln:recursion-4}--\ref{ln:recursion-5} we normalize the weights.
In line~\ref{ln:result} we compute the protected prediction
as weighted average of the predictions produced by the elementary predictors.

We refer to the method exemplified by Algorithm~\ref{alg:CJ-predictor}
as \emph{protected probabilistic classification}.
We start from a base predictive system,
design a way of gambling against it (a test martingale),
and then ``turn the tables'' and use the test martingale
as a protection against the kind of changes that the test martingale benefits from.
If and when those changes happen,
the product (\emph{protected probability process}) of the base probability process and the test martingale
outperforms the base probability process;
equivalently, the \emph{protected predictive system},
given by Algorithm~\ref{alg:CJ-predictor} applied to a base predictive system,
outperforms the base predictive system in terms of the log loss function.

We will use the notation
\[
  \Loss(p_1,\dots,p_n\mid y_1,\dots,y_n)
  :=
  \sum_{i=1}^n
  \lambda(y_i,p_i)
\]
for the cumulative log-loss of predictions $p_i\in[0,1]$ on labels $y_i\in\{0,1\}$,
where $\lambda$ is defined by \eqref{eq:log-loss}.
The protection provided by Algorithm~\ref{alg:CJ-predictor}, and similar procedures, has its price,
since the loss suffered by the protected predictive system can be greater
than the loss suffered by the base predictive system.
The \emph{price of protection} for Algorithm~\ref{alg:CJ-predictor} is defined to be
\begin{multline}\label{eq:PoP}
  \sup
  \bigl(
    \Loss(p'_1,\dots,p'_n\mid y_1,\dots,y_n)
    -
    \Loss(p_1,\dots,p_n\mid y_1,\dots,y_n)
  \bigr)\\
  =
  \sup
  \left(
    -\ln S_n
  \right),
\end{multline}
where the $\sup$ is over all $n$, all object spaces $\mathbf{X}$,
all base predictive systems (outputting predictions $p_1,p_2,\dots$),
and all sequences of observations (with $y_i$ as their labels).
Of course, the definition given by the left-hand side of~\eqref{eq:PoP}
is applicable to any system transforming predictions $p_1,p_2,\dots$ to predictions $p'_1,p'_2,\dots$;
for Algorithm~\ref{alg:CJ-predictor} we have an equivalent definition
given by the right-hand side of~\eqref{eq:PoP}, $S$ being the Composite Jumper martingale.
We are particularly interested in the case of a finite price of protection.

\section{A theoretical guarantee}
\label{sec:theory}

A simple performance guarantee for Algorithm~\ref{alg:CJ-predictor}
is given by the following result (proved in Appendix~\ref{app:proofs}).
\begin{theorem}\label{thm:main}
  The price of protection for Algorithm~\ref{alg:CJ-predictor} is $\log\frac{1}{\pi}$.
  Besides,
  for any $n$, any jumping rate $J$, any sequence of observations,
  and any sequence $f_{\theta_1},\dots,f_{\theta_n}$ of calibrating functions
  (from the family $(f_{\theta}\mid\theta\in\Theta)$),
  \begin{multline}\label{eq:main}
    \Loss(p'_1,\dots,p'_n\mid y_1,\dots,y_n)\\
    \le
    \Loss(f_{\theta_1}(p_1),\dots,f_{\theta_n}(p_n)\mid y_1,\dots,y_n)\\
    +
    \log\frac{1}{1-\pi}
    +
    \log\left|\mathbf{J}\right|
    +
    k \log(\left|\Theta\right|-1)\\
    +
    k \log\frac{1}{J'}
    +
    (n-k-1) \log\frac{1}{1-J'},
  \end{multline}
  where
  \begin{equation}\label{eq:J}
    J'
    :=
    \frac{\left|\Theta\right|-1}{\left|\Theta\right|}
    J
  \end{equation}
  and $k=k(\theta_1,\dots,\theta_n)$ is the number of switches,
  \[
    k
    :=
    \left|\left\{
      i\in\{1,\dots,n\}: \theta_i \ne \theta_{i-1}
    \right\}\right|,
  \]
  with $\theta_0:=\boldsymbol{0}$.
\end{theorem}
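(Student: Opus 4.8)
The plan is to treat the two assertions separately: read the price of protection off the averaged representation~\eqref{eq:average}, and derive~\eqref{eq:main} by the standard single-competitor lower bound for the Aggregating Algorithm. For the price of protection, note first that each $S^J_n$ is a nonnegative test martingale, so~\eqref{eq:average} gives $S_n\ge\pi$ for all $n$ and all inputs; by~\eqref{eq:PoP} this already bounds the price by $\log\frac1\pi$. To see that the supremum in~\eqref{eq:PoP} actually equals $\log\frac1\pi$, I would exhibit inputs forcing $S_n\to\pi$. Take the two-element family $\Theta=\{\boldsymbol0,\theta\}$, all base predictions equal to $1/2$, and each label chosen against $f_\theta$ (that is, $y_i=0$ if $f_\theta(1/2)>1/2$ and $y_i=1$ otherwise); then every factor of the elementary test martingale~\eqref{eq:el-test} equals $1$ at a step in state $\boldsymbol0$ and a fixed constant below $1$ at a step in state $\theta$. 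Since the Simple Jumper chain spends a positive fraction of its time in state $\theta$, dominated convergence gives $S^J_n\to0$ for each $J$, hence $S_n\to\pi$ and $\sup(-\log S_n)=\log\frac1\pi$.

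For~\eqref{eq:main} I would start from the bijection between test martingales and probability processes underlying Algorithm~\ref{alg:CJ-predictor}: by construction $S_n=\prod_{i=1}^n B_{p'_i}(\{y_i\})\big/\prod_{i=1}^n B_{p_i}(\{y_i\})$, so that $-\log S_n$ equals $\Loss(p'_1,\dots,p'_n\mid y_1,\dots,y_n)-\Loss(p_1,\dots,p_n\mid y_1,\dots,y_n)$. Expanding~\eqref{eq:average} as a mixture over elementary predictors writes $S_n$ as $\pi$ plus a convex combination, with weights $\frac{1-\pi}{|\mathbf J|}$ times Simple Jumper path probabilities, of the elementary martingales $\prod_i B_{f_{\theta_i}(p_i)}(\{y_i\})/B_{p_i}(\{y_i\})$. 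All summands are nonnegative, so I would discard all but the single term indexed by the prescribed $J$ and the prescribed finite path $(\theta_1,\dots,\theta_n)$, obtaining a lower bound on $S_n$.

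Taking $-\log$ of that single term splits the estimate into three contributions. The mixing weight contributes $\log\frac{|\mathbf J|}{1-\pi}=\log\frac1{1-\pi}+\log|\mathbf J|$. The factor $-\sum_i\log\frac{B_{f_{\theta_i}(p_i)}(\{y_i\})}{B_{p_i}(\{y_i\})}$ equals $\Loss(f_{\theta_1}(p_1),\dots\mid\cdot)-\Loss(p_1,\dots\mid\cdot)$; after combining with $-\log S_n$ the two copies of $\Loss(p_1,\dots\mid\cdot)$ cancel, leaving exactly $\Loss(p'_1,\dots)\le\Loss(f_{\theta_1}(p_1),\dots)+\cdots$. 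For the remaining path-probability factor I would compute the Simple Jumper transitions in terms of $J'$ from~\eqref{eq:J}: staying has probability $(1-J)+J/|\Theta|=1-J'$, and moving to any one prescribed different state has probability $J/|\Theta|=J'/(|\Theta|-1)$. A path with $k$ switches thus has probability $\big(J'/(|\Theta|-1)\big)^{k}(1-J')^{\,s}$, whose negative logarithm is $k\log(|\Theta|-1)+k\log\frac1{J'}+s\log\frac1{1-J'}$, matching the last three terms of~\eqref{eq:main} once the number $s$ of stays is identified with $n-k-1$.

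The step I expect to require the most care is this final count $s=n-k-1$. Establishing it — rather than the naive $n-k$ one would get by charging a transition at every one of the $n$ steps — depends on aligning the initial state $\theta_0=\boldsymbol0$ with the point in Algorithm~\ref{alg:CJ-predictor} at which the first jump is applied, so that the switch count $k$ is still measured against $\theta_0$ while only $n-1$ stay/switch transitions are actually priced. This is also where consistency with the Fixed Share specialization ($\pi=0$, $|\mathbf J|=1$) has to be verified. Everything else is the routine Aggregating Algorithm / tracking-the-best-expert bookkeeping.
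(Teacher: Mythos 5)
Your lower-bound argument for the price of protection has a genuine gap. The supremum in \eqref{eq:PoP} ranges over $n$, object spaces, base predictive systems and observation sequences, but \emph{not} over the family of calibrating functions: $\Theta$ and $(f_\theta)$ are fixed parameters of Algorithm~\ref{alg:CJ-predictor}, and the theorem asserts the value $\log\frac{1}{\pi}$ for whatever (non-degenerate) family the algorithm is run with, e.g.\ the nine-element Cox family of Section~\ref{sec:experimental}. By taking $\Theta=\{\boldsymbol{0},\theta\}$ you are choosing the algorithm, not an input. Moreover, your construction does not extend to general $\Theta$ by simply betting every label against one fixed $f_{\theta^*}$: if the family contains calibrators pulling in opposite directions (as \eqref{eq:Cox} with $\alpha\in\{-1,0,1\}$ does), the constant label sequence that kills the paths sitting in $\theta^*$ makes the elementary martingale of the path sitting in an opposite $\theta'$ grow like $c^n$ with $c>1$, while its prior weight decays only like $(1-J')^{n-1}$; for the jumping rates actually used one has $(1-J')c>1$, so $S^J_n\to\infty$ rather than $0$ and the sequence witnesses nothing. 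This is precisely why the paper's proof takes the labels i.i.d.\ $B_p$ under the base system, invokes Kabanov's criterion to show the Simple Jumper mixture and the base system are mutually singular, and deduces $S^J_n\to0$ almost surely (hence along some deterministic sequence, simultaneously for all $J\in\mathbf{J}$). Your two-state argument via the ergodic theorem and dominated convergence is a pleasantly elementary substitute when $\left|\Theta\right|=2$, but for the theorem as stated you need the singularity argument or some replacement that handles an arbitrary fixed family.

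Your derivation of \eqref{eq:main} is essentially the paper's: both rest on Lemma~\ref{lem:main} (the protected probability process is the prior mixture of the elementary probability processes), followed by discarding all mixture terms except the one indexed by the prescribed $J$ and path and applying $-\log$; your transition probabilities $1-J'$ and $J'/(\left|\Theta\right|-1)$ agree with the paper's. On the point you flag as delicate: your ``naive'' count is in fact the correct one for the algorithm as written. The jump of line~\ref{ln:recursion-1} is executed at the start of every iteration, including $n=1$, so generating $\theta_1,\dots,\theta_n$ from $\theta_0=\boldsymbol{0}$ costs $n$ priced transitions, of which $n-k$ are stays; there is no alignment of the first jump under which only $n-1$ transitions are priced. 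The paper's proof states the exponent $n-k-1$ without justification; the count actually yields $(n-k)\log\frac{1}{1-J'}$, which weakens \eqref{eq:main} by a single additive $\log\frac{1}{1-J'}$ --- immaterial in practice, but you should not expect the alignment argument you sketch to recover $n-k-1$, and the Fixed Share comparison will not supply it either.
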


The price of protection $\log\frac{1}{\pi}$ in Theorem~\ref{eq:main} is small unless $\pi$ is very close to 0,
even for test sets of a moderate size.
For example, setting $\pi:=0.5$ appears a reasonable compromise between the two terms involving $\pi$ in the price of protection
and in \eqref{eq:main}, and we will use it in our experiments in the next section.

The value $J'$ introduced in \eqref{eq:J} is an alternative parameterization of the jumping rate
(and it is used in, e.g., \cite{Vovk:1999derandomizing} as the main one);
it is usually close to (or at least has the same order of magnitude as) $J$.
We may call $J'$ the effective jumping rate:
it is the probability that the state actually changes at a given step.

The \emph{regret term} in the last two lines of \eqref{eq:main} can be interpreted as follows.
First, it gives the degree to which we are competitive with an ``oracular'' predictive system
that knows in advance which calibration function should be used at each step.
We have already discussed the addend $\log\frac{1}{1-\pi}$,
and $\log\left|\mathbf{J}\right|$ is the price,
typically very moderate ($\log 3$ in our experiments),
that we pay for using several jumping rates.
The following two addends in the regret term give us the price,
in terms of the log loss, for each switch.
Namely, each switch costs us $\log(\left|\Theta\right|-1)$ (which is $\log 8$ or $\log 20$ in our experiments)
plus an amount that depends on the switching rate, namely $\log\frac{1}{J'}$.
The last term in \eqref{eq:main} is close, assuming $k\ll n$ and $J'\ll1$, to $n J'$.
A reasonable choice of $J'$ is the inverse $1/N$ of the expected number $N$ of observations in the test set
(but remember that Algorithm~\ref{alg:CJ-predictor} covers a range of $J$,
which is motivated by $N$ typically not being known in advance).
With this choice the price $\log\frac{1}{J'}$ to pay per switch becomes $\log N$.

\begin{remark}
  The Markov chains usually used in tracking the best expert
  (see, e.g., \cite[Section 3.3]{Vovk:1999derandomizing})
  choose the first state randomly with equal probabilities,
  whereas our Simple Jumper Markov chain starts from the neutral state $\boldsymbol{0}\in\Theta$.
  This expresses the ``presumption of innocence'' towards the base predictive system:
  we do not calibrate it unless calibration is needed
  (and we definitely do not calibrate it at the very beginning of the test set).
  The two approaches lead to extremely similar results in our experiments.
\end{remark}

\section{Experimental results}
\label{sec:experimental}

There are not many datasets suitable for our experiments.
First, they should be ordered chronologically (to fit the scenario of Section~\ref{sec:introduction}),
or at least contain timestamps for all observations.
And second, they should not be of the time-series type,
so that applying typical machine-learning prediction algorithms
(such as those implemented in \texttt{scikit-learn}) makes sense.

Our main dataset will be \texttt{Bank Marketing}
(the only dataset in the top twelve most popular datasets at the UC Irvine Machine Learning Repository
that satisfies our requirements;
we will use, however, the full version of this dataset as given at the \texttt{openml.org} repository,
which is easier in \texttt{scikit-learn}).
The dataset consists of 45,211 observations representing telemarketing calls for selling long-term deposits
offered by a Portuguese retail bank, with data collected from 2008 to 2013 \cite{Moro/etal:2014}.
The labels are 1 or 2, which we encode as 0 and 1, respectively
(and we will never mention the original labels again in this paper).
Label 1 indicates a successful sale, and such observations comprise only 12\% of all labels.

The observations are listed in chronological order.
We take the first 10,000 observations as the training set,
normalize the attributes of the objects using \texttt{StandardScaler} in \texttt{scikit-learn}
(although normalization barely affects our results),
and train Random Forest with default parameters and random seed 2021 on it.
(In our figures in this section we use Random Forest as the base prediction algorithm,
since it consistently produces good results in our experiments.)
Random Forest often outputs probabilities of success that are equal to 0 or 1,
and when such a prediction turns out to be wrong (which happens repeatedly),
the log-loss is infinite.
It is natural, therefore, to truncate a probability $p\in[0,1]$ of 1 to the interval $[\epsilon,1-\epsilon]$
replacing $p$ by
\begin{equation}\label{eq:binary-truncation}
  p^*
  :=
  \begin{cases}
    \epsilon & \text{if $p\le\epsilon$}\\
    p & \text{if $p\in(\epsilon,1-\epsilon)$}\\
    1-\epsilon & \text{if $p\ge1-\epsilon$},
  \end{cases}
\end{equation}
where we set $\epsilon:=0.01$
(in \texttt{scikit-learn}, $\epsilon=10^{-15}$,
but it appears excessive to us).
The resulting prediction model is our base predictive system.
After we find it, we never use the training set again,
and the numbering of observations starts from the first element of the test set
(i.e., the dataset in the chronological order without the training set).

\begin{figure}
  \begin{center}
    \includegraphics[width=0.48\textwidth]{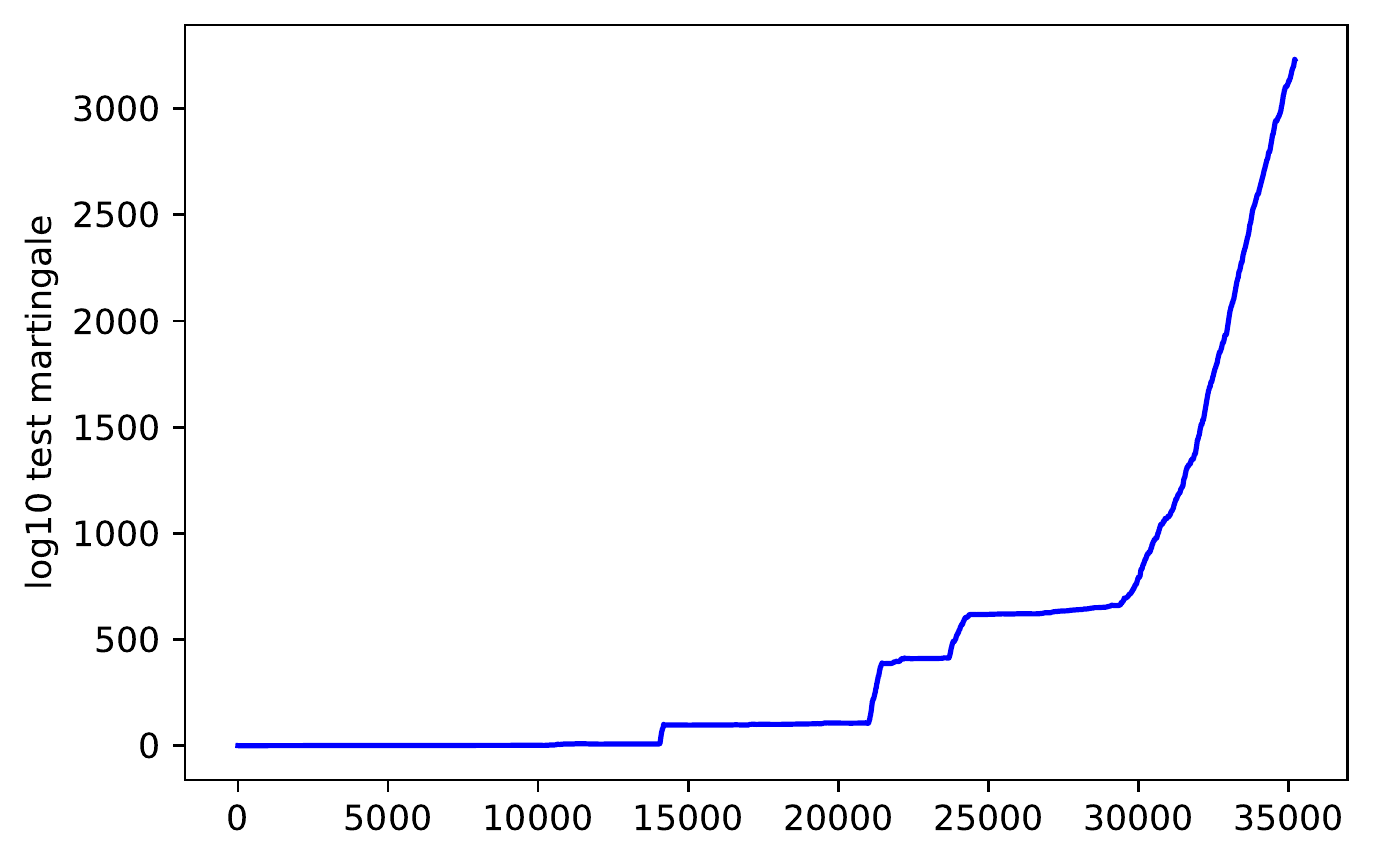}
    \includegraphics[width=0.48\textwidth]{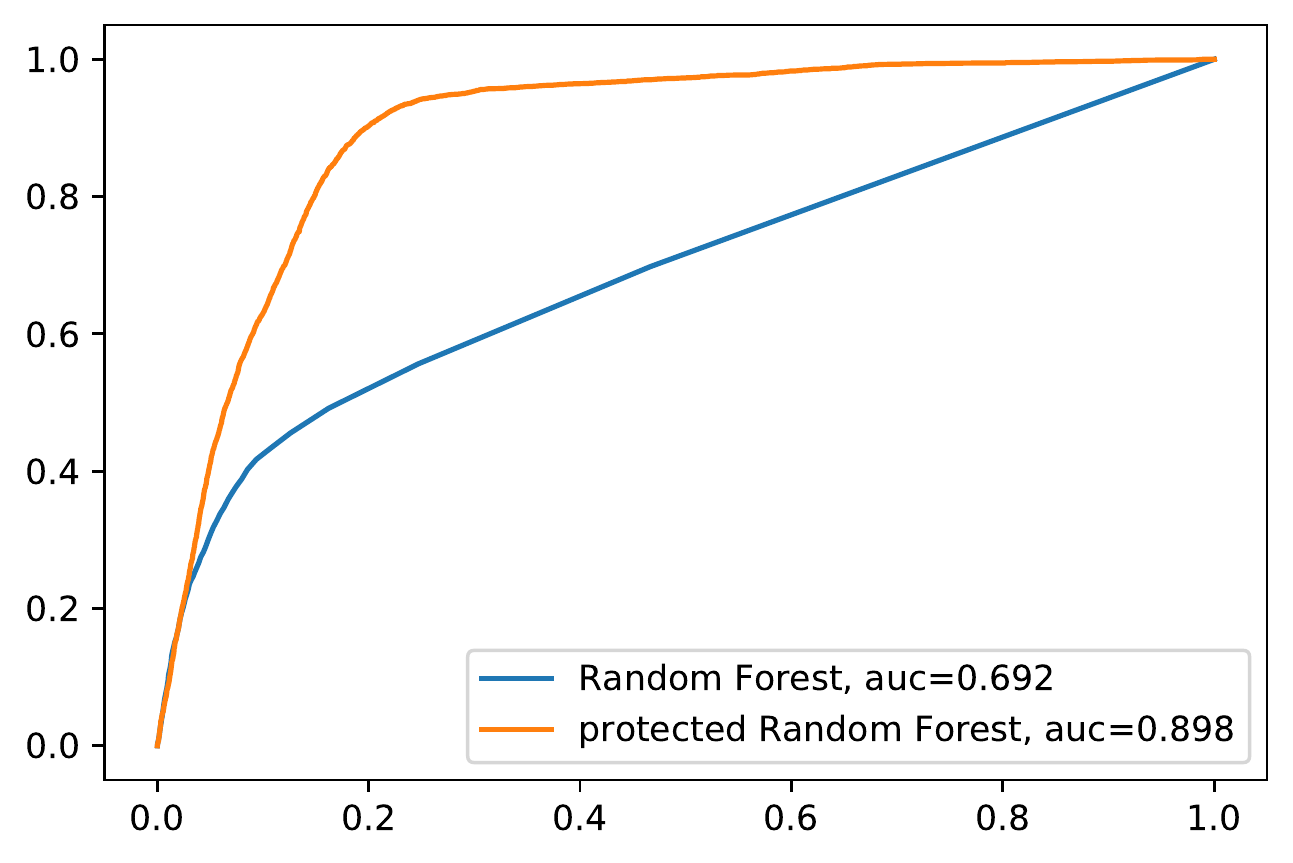}
  \end{center}
  \caption{Left panel: The Composite Jumper test martingale
    for the \texttt{Bank Marketing} dataset and Random Forest.
    Right panel: The ROC curve for the Composite Jumper protection.}
  \label{fig:BM_Cox}
\end{figure}

The left panel of Figure~\ref{fig:BM_Cox}
shows the trajectory of $\log_{10}S_n$, $n=1,\dots,35211$,
where $S_n$ is the value of the Composite Jumper test martingale over the test set
with the jumping rates $\mathbf{J}:=\{10^{-2},10^{-3},10^{-4}\}$
and the family \eqref{eq:Cox} with $\alpha\in\{-1,0,1\}$ and $\beta\in\{0.5,1,2\}$.
With this choice of the ranges of $\alpha$ and $\beta$,
which we always use in the binary case,
there are $\left|\Theta\right|=9$ of parameter vectors $\theta:=(\alpha,\beta)$.
The final value of the test martingale in the left panel of Figure~\ref{fig:BM_Cox}
is approximately $10^{3231.7}$.

The right panel of Figure~\ref{fig:BM_Cox} gives the ROC curve for Random Forest
and Random Forest protected by Algorithm~\ref{alg:CJ-predictor}.
We can see that the improvement is substantial.
In terms of the log-loss function and decimal logarithms,
the loss goes down from $7185.1$ to $3953.4$
(the difference between these two numbers being, predictably,
the exponent $3231.7$ in the final value of the test martingale in the left panel).

\begin{table}
  \caption{The AUC for the \texttt{Bank Marketing} dataset
    and key \texttt{scikit-learn} functions (with default parameters)
    together with their protected versions.}
  \label{tab:BM}
  \begin{center}
    \begin{tabular}{l|cc}
      \textbf{Prediction algorithm} & \textbf{base} & \textbf{protected} \\
      \hline
      Random Forest & 0.692 & 0.898 \\
      Gradient Boosting & 0.734 & 0.901 \\
      Decision Trees & 0.564 & 0.814 \\
      Neural Network & 0.665 & 0.879 \\
      SVM & 0.686 & 0.844 \\
      Naive Bayes & 0.646 & 0.807 \\
      Logistic Regression & 0.609 & 0.838
    \end{tabular}
  \end{center}
\end{table}

Table~\ref{tab:BM} gives the AUC (area under curve) for Algorithm~\ref{alg:CJ-predictor}
and several base prediction algorithms implemented in \texttt{scikit-learn}.
Since the dataset is imbalanced, AUC is a more suitable measure of quality than error rate.

\begin{figure}
  \begin{center}
    \includegraphics[width=0.6\textwidth]{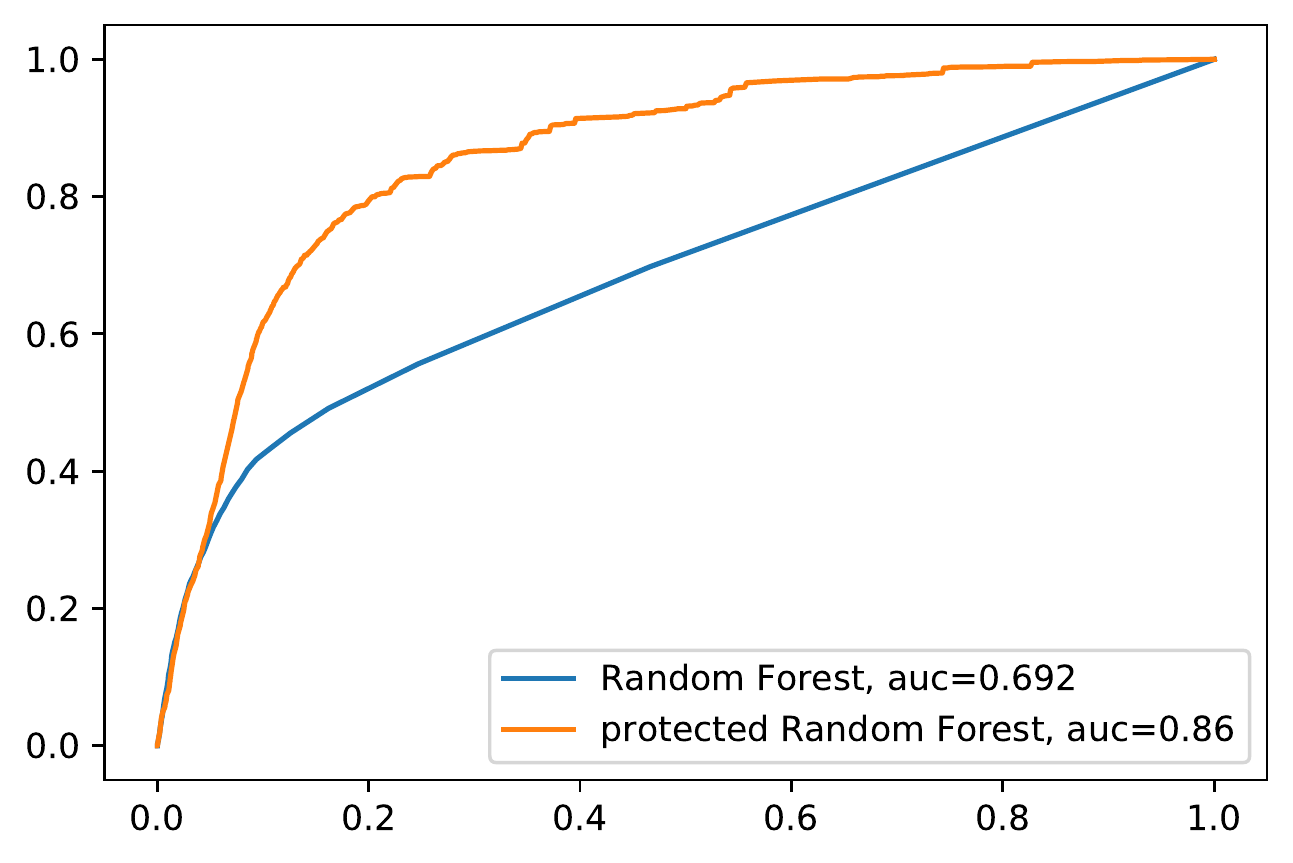}
  \end{center}
  \caption{The ROC curve for the \texttt{Bank Marketing} dataset
    and Random Forest with the Composite Jumper protection
    and feedback provided for every 100th test observation.}
  \label{fig:BM_Cox_ROC_lim}
\end{figure}

In our experiments so far we have assumed that every test observation
is used for recalibrating the prediction model.
From the practical point of view this may be unrealistic,
and in reality we can only hope to get feedback on a fraction of the test observations.
Figure~\ref{fig:BM_Cox_ROC_lim} is the counterpart of the right panel of Figure~\ref{fig:BM_Cox}
for the case where the vast majority of observations are predicted by the prediction model
without getting any feedback,
and the weight updates in Algorithm~\ref{alg:CJ-predictor} are run only on every 100th test observation
(so that the same weights $P$ and $A^J_{\theta}$ are used
for the observations $100k+1$, $100k+2$,\dots, $100k+100$ for $k=0,1,\dots$).
Now the improvement is more modest.

\begin{figure}
  \begin{center}
    \includegraphics[width=0.48\textwidth]{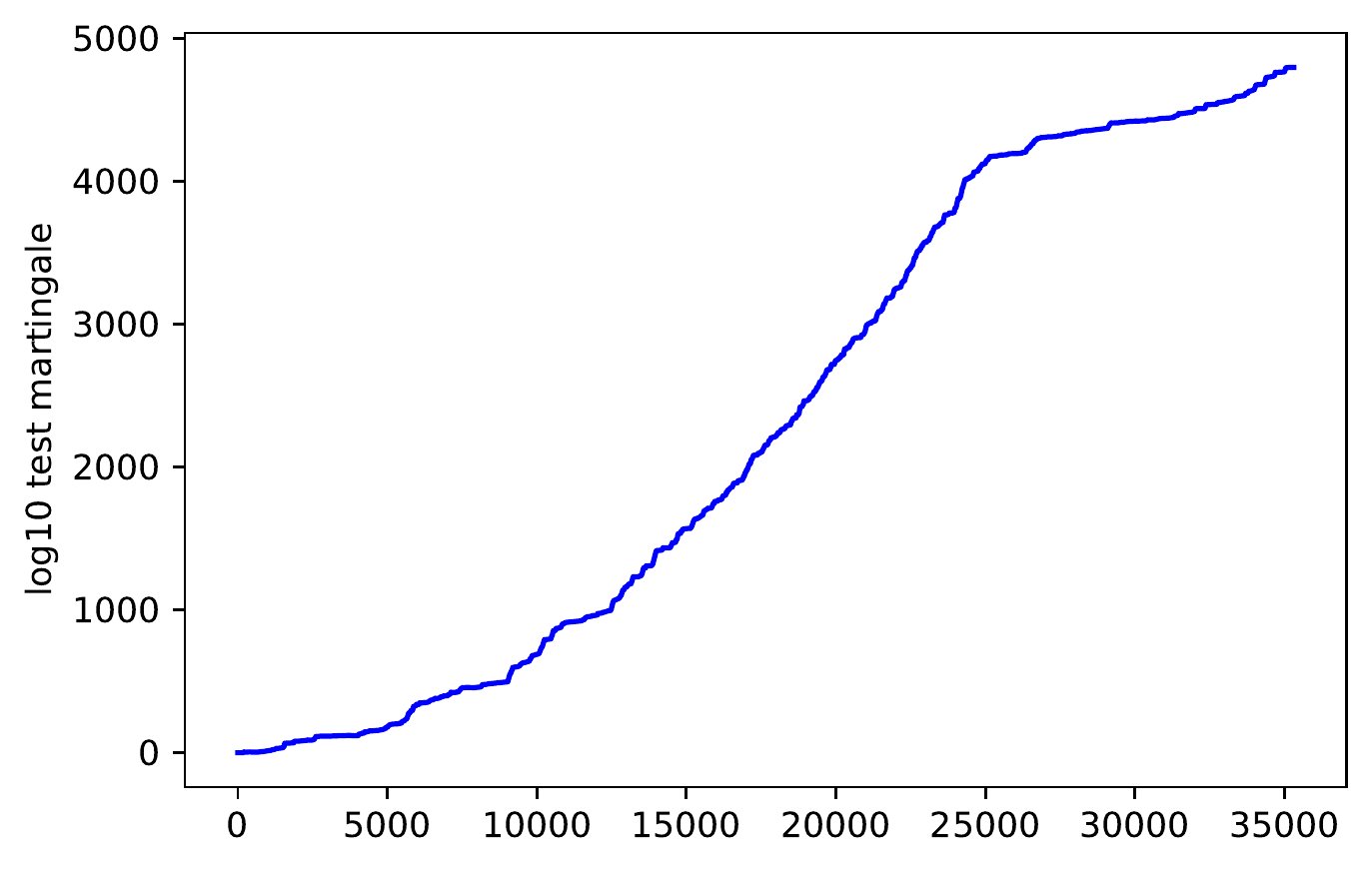}
    \includegraphics[width=0.48\textwidth]{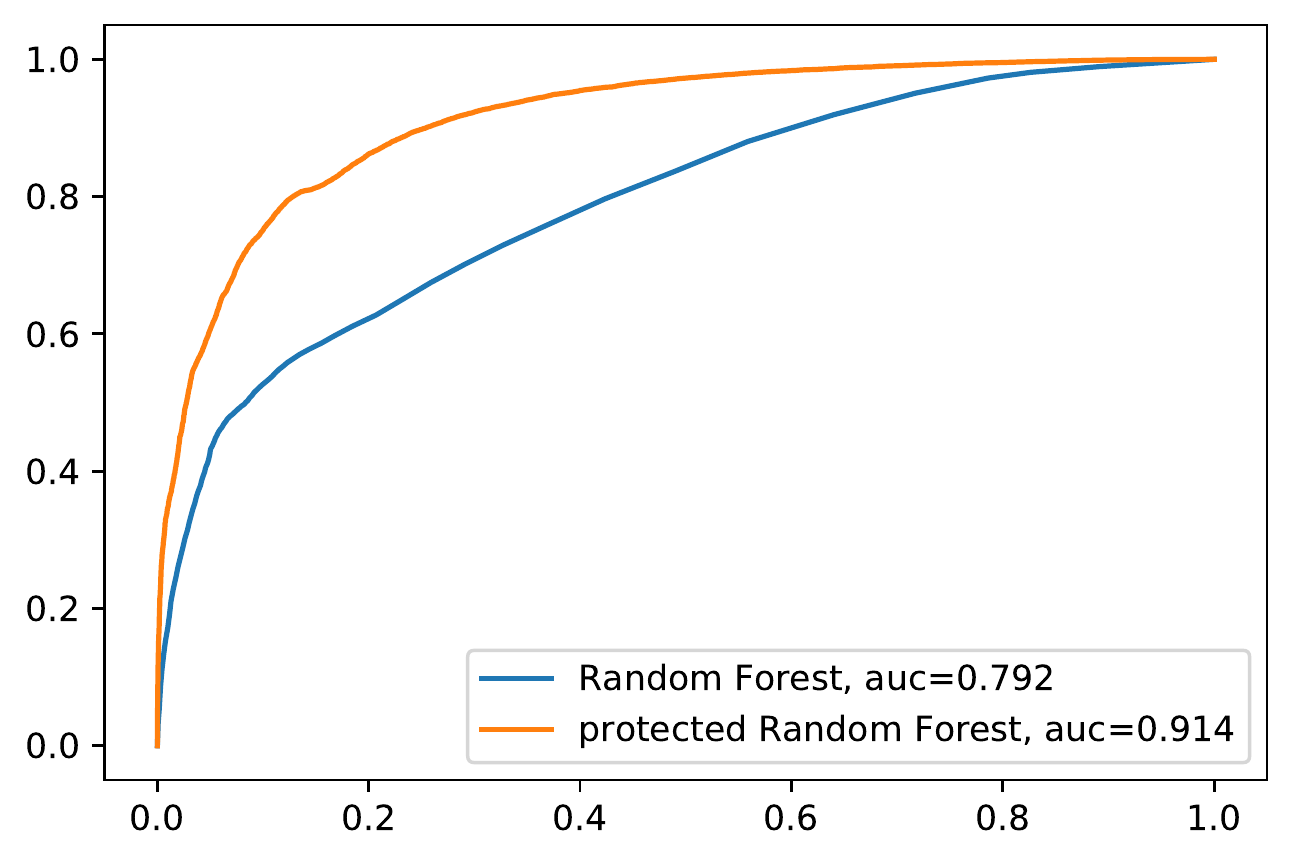}
  \end{center}
  \caption{The analogue of Figure~\ref{fig:BM_Cox} for the \texttt{electricity} dataset.}
  \label{fig:el_Cox}
\end{figure}

Another similar dataset is \texttt{electricity} \cite{Harries:1999,Gama/etal:2004}, available from \texttt{openml.org}.
Its 45,312 observations contain binary labels
(whether the electricity price in New South Wales goes up or down,
encoded as 1 and 0, respectively)
together with relevant attributes,
collected from 7 May 1996 to 5 December 1998.
The observations are listed in the chronological order, as for \texttt{Bank Marketing},
and we use the same scheme, allocating the first 10,000 observations to the training set
and normalizing the attributes with \texttt{StandardScaler};
the figures use the same base prediction algorithm (Random Forest).
Figure~\ref{fig:el_Cox} shows results for this dataset;
protection still greatly improves the performance of the base predictions.

\begin{table}
  \caption{Numbers of errors for the \texttt{electricity} dataset
    and some \texttt{scikit-learn} functions (with default parameters)
    together with their protected versions.}
  \label{tab:electricity}
  \begin{center}
    \begin{tabular}{l|cc}
      \textbf{Prediction algorithm} & \textbf{base} & \textbf{protected} \\
      \hline
      Random Forest & 9184 & 5846 \\
      Gradient Boosting & 9165 & 6009 \\
      Decision Trees & 9372 & 6806 \\
      Neural Network & 14,358 & 7469 \\
      SVM & 14,433 & 9532
    \end{tabular}
  \end{center}
\end{table}

Table~\ref{tab:electricity} gives results for Algorithm~\ref{alg:CJ-predictor}
and several base prediction algorithms
(\texttt{scikit-learn}'s Naive Bayes and Logistic Regression fail on this dataset
producing inconsistent results).
Now the dataset is balanced, and so we report the numbers of errors
(i.e., the cases of the predictions being different from the true labels).

\subsection*{Multiclass case}

Here we work with the \texttt{UJIIndoorLoc} dataset \cite{Torres/etal:2014},
available from the UC Irvine Machine Learning Repository.
The attributes are intensity levels of 520 wireless access points (WAPs) in three buildings
of the Jaume I University in Castell\'o de la Plana, Valencia, Spain.
The task we are interested in is to identify the building given the WAP intensity levels.
The dataset consists of two parts, a sizable original training set and a much smaller original validation set
(the latter collected 4 months after the former).
Since the attributes that we use are given on the same scale, there is no need to normalize them.

We consider two scenarios:
\begin{itemize}
\item
  In Scenario 1, we ignore the original validation set and use only the original training set,
  which we order chronologically and then split into the training set consisting of the first 10,000 observations
  and the test set consisting of the remaining observations,
  as we did for \texttt{Bank Marketing} and \texttt{electricity}.
\item
  In Scenario 2, we use the original training set as our training set
  and the original validation set as our test set.
\end{itemize}

In this multiclass case
we truncate a probability measure $p=(p_y)$ slightly differently from the binary procedure~\eqref{eq:binary-truncation};
namely, we set
\begin{equation}\label{eq:multiclass-truncation}
  p^*_y
  :=
  \frac{\max(p_y,\epsilon)}{\sum_{y'\in\mathbf{Y}}\max(p_{y'},\epsilon)},
  \quad
  y\in\mathbf{Y}
\end{equation}
(where $\mathbf{Y}$ are the labels standing for the three buildings, $\left|\mathbf{Y}\right|=3$),
and we still set $\epsilon:=0.01$.

\begin{table}
  \caption{Numbers of errors in Scenario 1 for the \texttt{UJIIndoorLoc} dataset
    and key \texttt{scikit-learn} functions (with default parameters).}
  \label{tab:scenario-1}
  \begin{center}
    \begin{tabular}{l|cc}
      \textbf{Prediction algorithm} & \textbf{base} & \textbf{protected} \\
      \hline
      Random Forest & 556 & 208 \\
      Gradient Boosting & 1397 & 1041 \\
      Decision Trees & 1185 & 1185 \\
      Neural Network & 1289 & 1261 \\
      SVM & 354 & 199 \\
      Naive Bayes & 43 & 43 \\
      Logistic Regression & 290 & 244
    \end{tabular}
  \end{center}
\end{table}

First we report our results for the more difficult Scenario~1.
It is interesting that one of the buildings is not in the test set;
it seems that the buildings were explored systematically.
The results for Scenario 1 are given in Table~\ref{tab:scenario-1},
where $\beta$ range over $\{0.5,1,2\}$ and $\alpha$ range over the 7 binary vectors of length 3
apart from $(1,1,1)$ (which is not included since the corresponding calibrating function
is the same as for $(0,0,0)$);
of course, the neutral calibrating function is the one with $\alpha=(0,0,0)$ and $\beta=1$.

Protection always improves results, sometimes significantly, apart from Decision Trees and Naive Bayes,
for which the number of errors stays the same, 1185 and 43, respectively.
In the case of Decision Trees, the vast majority of the base predictions are categorical,
concentrating on one label, which makes their calibration problematic.
(Namely, 9251 out of 9937 predictions are categorical, assigning probability 1 to one of the labels;
in particular, all wrong predictions are categorical.
The least categorical of the remaining predictions assigns probability 0.974 to one of the labels.)
In the case of Naive Bayes, the quality of the probabilistic predictions still improves greatly:
the log10 loss of the base predictive system is $\infty$
because the loss is $\infty$ for 8 observations;
if those 8 observations are ignored, the log10 loss is $2304.76$,
whereas the log10 loss of the protected predictive system is $84.41$ (without any infinities).

\begin{table}
  \caption{Numbers of errors in Scenario 2 for the \texttt{UJIIndoorLoc} dataset
    and \texttt{scikit-learn} functions with default parameters.}
  \label{tab:scenario-2}
  \begin{center}
    \begin{tabular}{l|cc}
      \textbf{Prediction algorithm} & \textbf{base} & \textbf{protected} \\
      \hline
      Random Forest & 2 & 1 \\
      Gradient Boosting & 2 & 2 \\
      Decision Trees & 21 & 10 \\
      Neural Network & 1 & 1 \\
      SVM & 4 & 4 \\
      Naive Bayes & 9 & 9 \\
      Logistic Regression & 0 & 0
    \end{tabular}
  \end{center}
\end{table}

Scenario 2 is extremely easy (since all three buildings have been explored completely);
e.g., Logistic Regression as implemented in \texttt{scikit-learn} does not make any errors.
For the results, see Table~\ref{tab:scenario-2}.
It is reassuring that the number of errors never goes up as result of protection.

\section{Conclusion}
\label{sec:conclusion}

The methods of adaptive calibration that we propose in this paper need to be validated
on other datasets and perhaps for other calibrating functions.
Notice that calibrating functions may depend not only on the current predicted probability $p$
but also on the current object $x$.
(So that ``calibration'' may be understood in a very wide sense, as in \cite{Dawid:1985},
and include elements of ``resolution'' \cite{Dawid:ESS2006PF}.)

In this paper we only use the log loss function.
Arguably it is the most fundamental one \cite{Vovk:2015Yuri},
but its disadvantage is that, for many base prediction algorithms,
we need truncation (see \eqref{eq:binary-truncation} and \eqref{eq:multiclass-truncation})
to prevent an infinite loss.
A popular alternative to the log loss function is the Brier loss function \cite{Brier:1950};
it is much more forgiving and does not require truncation.
It follows from the results of \cite{Vovk/Zhdanov:2009}
that Theorem~\ref{thm:main} will continue to hold
when the log loss function is replaced by the Brier loss function.
Empirical studies of the performance of our procedures in this case
are an interesting direction of further research.

A useful feature of our procedure of protection is that it is cheap,
which is achieved by mixing Simple Jumper martingales with constant 1: see \eqref{eq:average}.
The role of mixing with 1 is to insure against a catastrophic loss of evidence
against the null hypothesis (given by the base predictive system)
found by those test martingales.
There are much more sophisticated ways of insuring against loss of evidence \cite[Chapter 11]{Shafer/Vovk:2019},
and they will provide further protection.

\subsection*{Acknowledgments}

We are grateful to Glenn Shafer for his advice.
Phelype Oleinik has helped us with \TeX.
This research has been partially supported by Amazon and Stena Line.
In our computational experiments we have used \texttt{scikit-learn} \cite{scikit-learn:2011}.

\appendix

\section{Some proofs}
\label{app:proofs}

The second statement of Theorem~\ref{thm:main} can be deduced
from the fact that the probability process of the Bayes mixture
is equal to the average of the elementary predictors' probability processes
with respect to the prior probability measure on the elementary predictors.
This can be checked directly and is a special case of a known result for the AA
specialized to the log loss function.
In the context of this special case of the AA,
a probability process is a process of the form $\exp(-L)$, where $L$ is a loss process.
The following lemma is the main property of the AA in this context.

\begin{lemma}\label{lem:main}
  The probability process of the AA with the log loss function
  is the average of the elementary predictors' probability processes
  with respect to the prior probability measure.
\end{lemma}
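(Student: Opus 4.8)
The plan is to verify the claimed identity directly by matching the Bayesian update in Algorithm~\ref{alg:CJ-predictor} against the definition of a mixture. The key observation is that, for the log-loss function~\eqref{eq:log-loss}, the one-step multiplicative loss factor of an elementary predictor outputting $q$ on label $y$ is exactly $\exp(-\lambda(y,q)) = B_q(\{y\})$, the Bernoulli probability it assigns to the observed label. So ``probability process'' and ``$\exp(-\text{loss})$'' coincide, and the elementary predictor indexed by a sequence $(J,\boldsymbol\theta)$ has probability process $\prod_{i=1}^n B_{f_{\theta_i}(p_i)}(\{y_i\})$, while the passive predictor has $\prod_{i=1}^n B_{p_i}(\{y_i\})$. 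First I would define the mixture process $M_n$ as the integral of these elementary probability processes against the prior described in the three bullet points preceding the statement, and then argue that the unnormalized quantities $P$ and $A^J_\theta$ maintained by the algorithm are precisely the contributions to $M_n$ from the passive predictor and from the active predictors currently in state $\theta$ (before the normalization in lines~\ref{ln:recursion-4}--\ref{ln:recursion-5}).

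The core of the argument is an induction on $n$ showing that, at the end of step $n$ but before normalization, the quantity $C$ computed in line~\ref{ln:C} equals $M_n$, and that $P$ and the $A^J_\theta$ decompose $M_n$ according to which elementary predictor and which current state is responsible. The inductive step has two ingredients that I would treat separately. The transition update in line~\ref{ln:recursion-1}, $A^J_\theta \leftarrow (1-J)A^J_\theta + (A J/|\Theta|)$ with $A=\sum_\theta A^J_\theta$, is exactly the forward (Chapman--Kolmogorov) recursion for the Simple Jumper Markov chain: it redistributes the mass in state $\theta$ by retaining a $(1-J)$ fraction and injecting a uniform $J/|\Theta|$ share of the total. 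The Bayesian weight update in lines~\ref{ln:recursion-2}--\ref{ln:recursion-3} then multiplies each weight by the corresponding one-step factor $B_{p_n}(\{y_n\})$ or $B_{f_\theta(p_n)}(\{y_n\})$. Composing these two operations reproduces exactly one factor of the product defining each elementary probability process, weighted by the prior transition probability, so the marginalized-over-states sum of the updated weights advances $M_{n-1}$ to $M_n$.

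The normalization in lines~\ref{ln:recursion-4}--\ref{ln:recursion-5} divides everything by $C=M_n$, which I would handle by carrying a running normalizing constant and noting that the prediction formula in line~\ref{ln:result} is scale-invariant: $p'_n = p_n P + \sum_{J,\theta} f_\theta(p_n) A^J_\theta$ uses the \emph{normalized} posterior weights from the end of the previous step, so the product of all normalizers telescopes and $M_n$ equals the product $\prod_{i=1}^n C_i$ of the constants. Hence $M_n$ is both the Bayes mixture and the algorithm's (unnormalized) total weight, which is what the lemma asserts once we identify $S_n$ in~\eqref{eq:PoP} with this mixture divided by the base probability process.

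I expect the main obstacle to be bookkeeping the normalization cleanly: one must be careful that line~\ref{ln:result} consumes weights from \emph{before} the current step's transition-and-update, and that the division by $C$ at the end of each step does not corrupt the invariant. The cleanest route is probably to prove the invariant for the \emph{unnormalized} recursion first (omitting lines~\ref{ln:recursion-4}--\ref{ln:recursion-5} and line~\ref{ln:C}), establishing that the unnormalized $P + \sum_{J,\theta}A^J_\theta$ equals $M_n$ by induction, and only afterwards observe that inserting the per-step normalization merely rescales all weights by a common positive factor and therefore does not affect the averaged prediction $p'_n$ nor the claimed equality, since it is homogeneous in the weights.
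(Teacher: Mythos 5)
Your route differs from the paper's, which simply cites Lemma 1 of \cite{Vovk:2001competitive}; the paper does remark that the identity ``can be checked directly'', and your induction on the algorithm's recursion is the natural way to do that. The bookkeeping you describe is sound: line~\ref{ln:recursion-1} is indeed the forward recursion for the Simple Jumper Markov chain, lines~\ref{ln:recursion-2}--\ref{ln:recursion-3} are the likelihood updates, and tracking the unnormalized weights shows that $P+\sum_{J,\theta}A^J_{\theta}$ (before normalization) equals the prior mixture $M_n$ of the elementary probability processes, with $M_n=\prod_{i\le n}C_i$ after reinstating the per-step normalizers.

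However, there is a gap at exactly the point where the lemma's content lies. The lemma asserts that $M_n$ equals \emph{the probability process of the AA}, which by the definition in Section~\ref{sec:prediction} is $\prod_{i=1}^n B_{p'_i}(\{y_i\})$ --- a product over the \emph{merged} predictions $p'_i$ output in line~\ref{ln:result}. You establish $M_n=\prod_{i=1}^n C_i$ and declare this ``what the lemma asserts'', but you never show $C_i=B_{p'_i}(\{y_i\})$. That step is where the log loss is actually used: since $q\mapsto B_q(\{y\})$ is affine ($B_q(\{1\})=q$ and $B_q(\{0\})=1-q$) and the weights entering line~\ref{ln:result} sum to $1$, one has
\begin{equation*}
  C_n
  =
  P\,B_{p_n}(\{y_n\})+\sum_{J,\theta}A^J_{\theta}\,B_{f_{\theta}(p_n)}(\{y_n\})
  =
  B_{p_n P+\sum_{J,\theta}f_{\theta}(p_n)A^J_{\theta}}(\{y_n\})
  =
  B_{p'_n}(\{y_n\}).
\end{equation*}
Everything else in your argument (Chapman--Kolmogorov recursion, Bayes updates, telescoping normalizers) would go through verbatim for any loss function, yet the lemma is specific to the log loss --- it is precisely the statement that averaging is an exact substitution function at learning rate $\eta=1$ --- so the affinity step is not a formality but the entire content. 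Adding the display above closes the gap. A minor further point: the lemma concerns the AA's probability process itself, not $S_n$ from~\eqref{eq:PoP}; the identification of the ratio with the Composite Jumper martingale is a corollary rather than part of what needs proving here.
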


\begin{proof}
  For a proof, see \cite[Lemma 1]{Vovk:2001competitive}.
\end{proof}

\subsection*{Proof of Theorem~\ref{thm:main}}

Let us first prove that the price of protection is $\log\frac{1}{\pi}$.
It is obvious that it does not exceed $\log\frac{1}{\pi}$,
and it suffices to prove that, for any $J\in\mathbf{J}$,
the likelihood ratio of the Simple Jumper predictor with jumping rate $J$ to the base predictive system
tends to 0 for some sequence of observations.
We will prove more: namely,
for a suitable choice of the base predictive system,
the likelihood ratio of the Simple Jumper predictor with jumping rate $J$ to the base predictive system
tends to 0 a.s.\ under the probability distribution of the base predictive system.

By the ergodic theorem for Markov chains (see, e.g., \cite[Theorem 1.10.2]{Norris:1997})
the Simple Jumper Markov chain will spend, asymptotically and almost surely,
the fraction $1/\left|\Theta\right|$ of its time in each state $\theta\in\Theta$.
Choose a $p$ such that $f_{\theta}(p)\ne p$ for some $\theta\in\Theta$.
Let the base predictive system always output $p$.
By Kabanov et al.'s criterion (see, e.g., \cite[Theorem 4]{Shiryaev:2019})
of mutual singularity of probability measures in ``predictable terms'',
the Simple Jumper predictive system with jumping rate $J$ and the base predictive system
will be mutually singular.
This implies (by \cite[Theorem 2]{Shiryaev:2019})
that, indeed, the likelihood ratio of the Simple Jumper to the base predictive system tends to 0 a.s.

To complete the proof of Theorem~\ref{thm:main},
notice that the probability that the Simple Jumper Markov chain with a given jumping rate $J\in\mathbf{J}$
produces $\theta_1,\dots,\theta_n$ with $k$ switches as its first $n$ states
is
\[
  \left(
    \frac{J'}{\left|\Theta\right|-1}
  \right)^k
  (1-J')^{n-k-1}.
\]
Therefore, the protected probability process is at least
\[
  \frac{1-\pi}{\left|\mathbf{J}\right|}
  \left(
    \frac{J'}{\left|\Theta\right|-1}
  \right)^k
  (1-J')^{n-k-1}
\]
times the probability process for any elementary predictor with jumping rate $J$
and the sequence of states beginning with $\theta_1,\dots,\theta_n$.
It remains to convert the probability processes into log loss processes
by applying the operation $-\log$.

\section{Further experimental results}
\label{app:further}

\begin{figure}
  \begin{center}
    \includegraphics[width=0.48\textwidth]{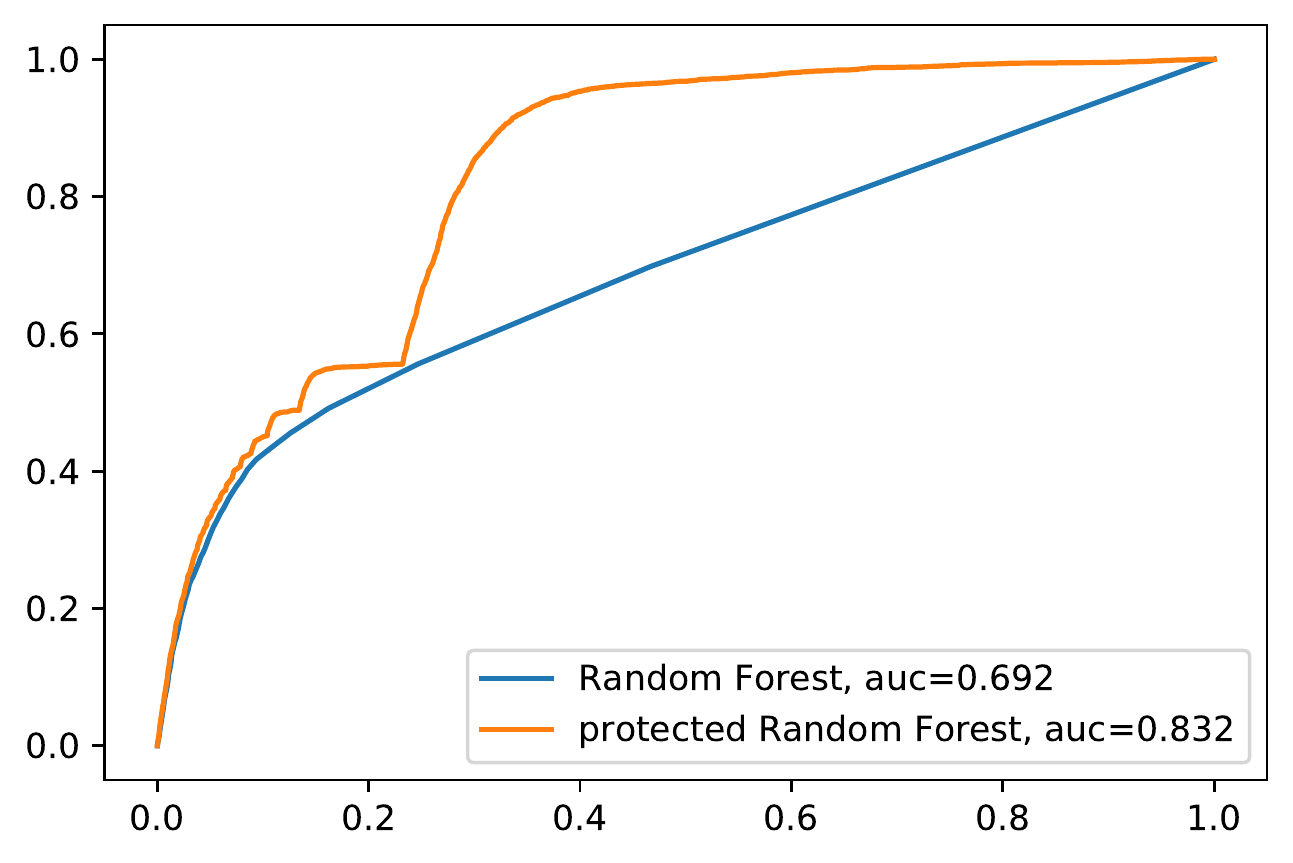}
    \includegraphics[width=0.48\textwidth]{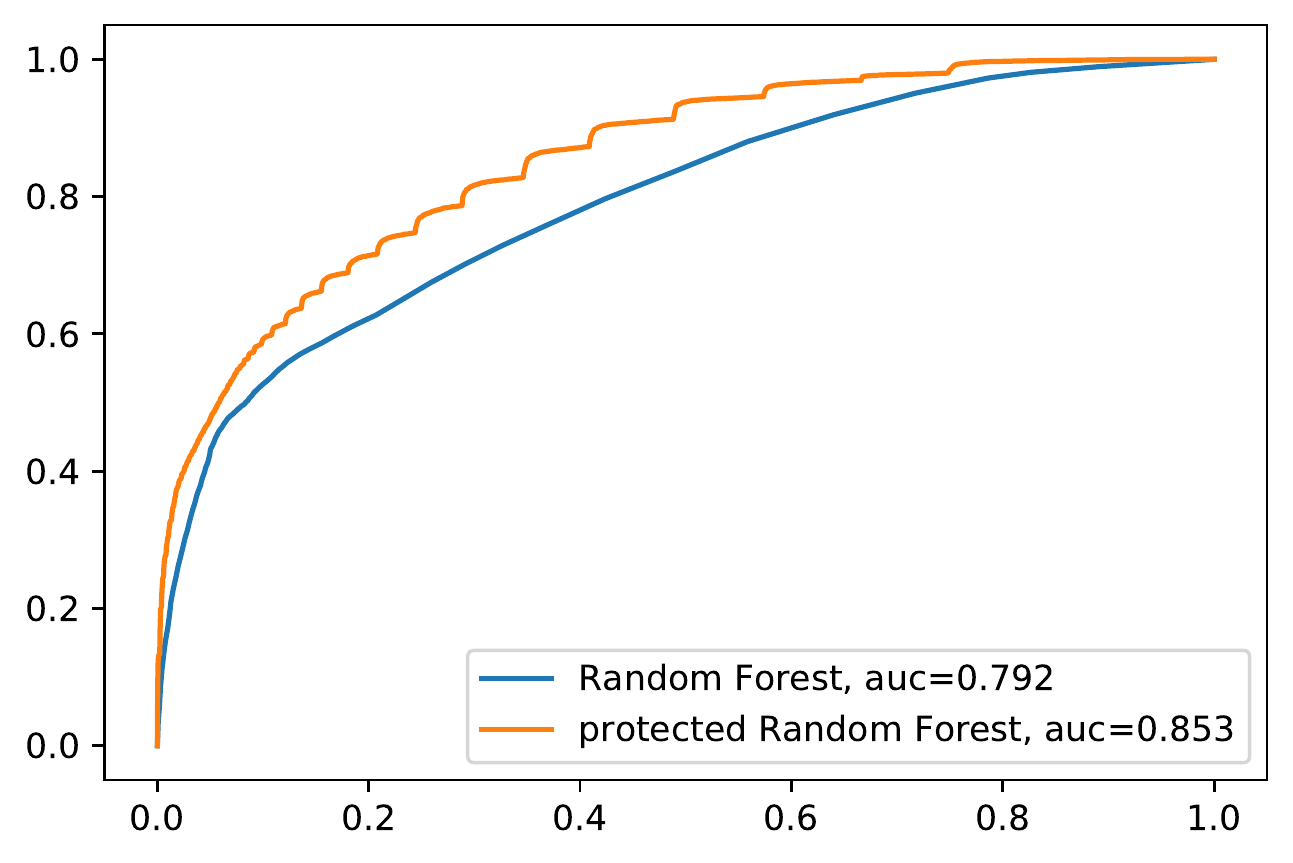}
  \end{center}
  \caption{The ROC curves for the \texttt{Bank Marketing} (left panel) and \texttt{electricity} (right panel) dataset
    and Random Forest with the Composite Jumper protection based on quadratic calibration.}
  \label{fig:quad_ROC}
\end{figure}

We start this appendix from reporting results
for the family \eqref{eq:quadratic} of quadratic calibrating functions
with $\theta$ restricted to a finite set $\Theta$.
As always, we choose a minimal $\Theta$, namely $\Theta:=\{-1,0,1\}$.
Figure~\ref{fig:quad_ROC} is the counterpart
of the right panels of Figures~\ref{fig:BM_Cox} and~\ref{fig:el_Cox} for this family.
In the rest of the paper we only consider Cox's calibrating functions.

\begin{figure}
  \begin{center}
    \includegraphics[width=0.48\textwidth]{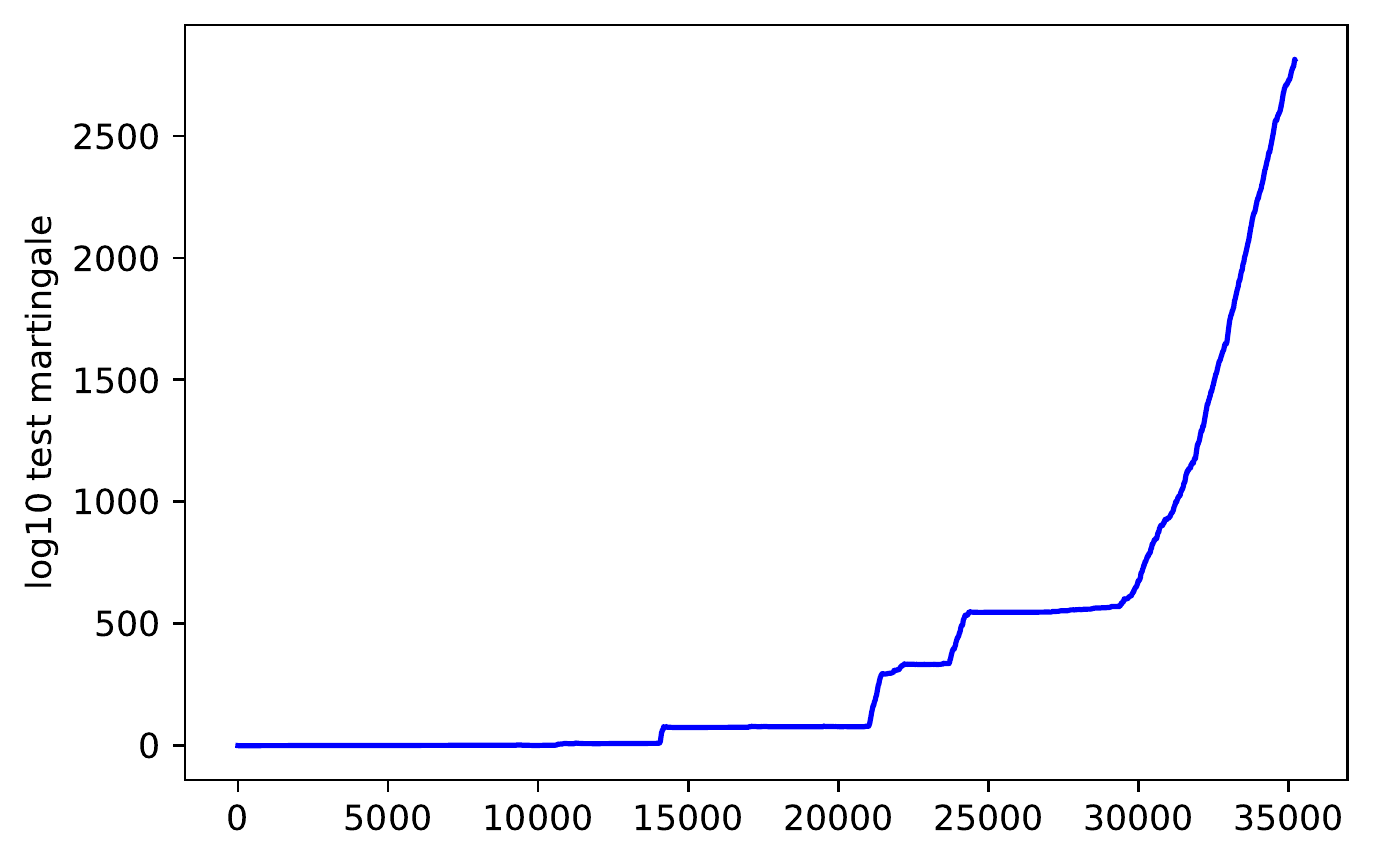}
    \includegraphics[width=0.48\textwidth]{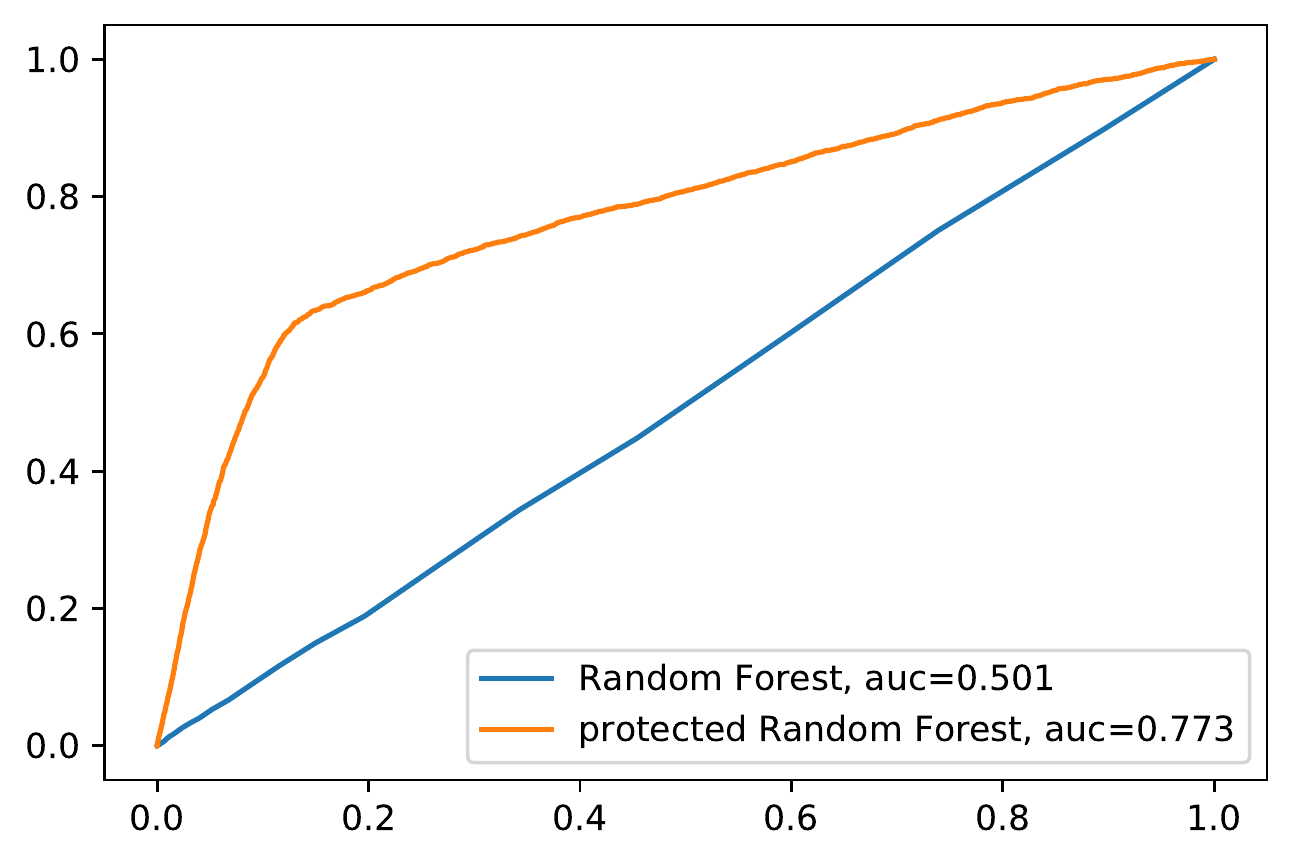}
  \end{center}
  \caption{The analogue of Figure~\ref{fig:BM_Cox} for the \texttt{Bank Marketing} dataset
    with randomly permuted objects.}
  \label{fig:BM_Cox_perm}
\end{figure}

\begin{figure}
  \begin{center}
    \includegraphics[width=0.48\textwidth]{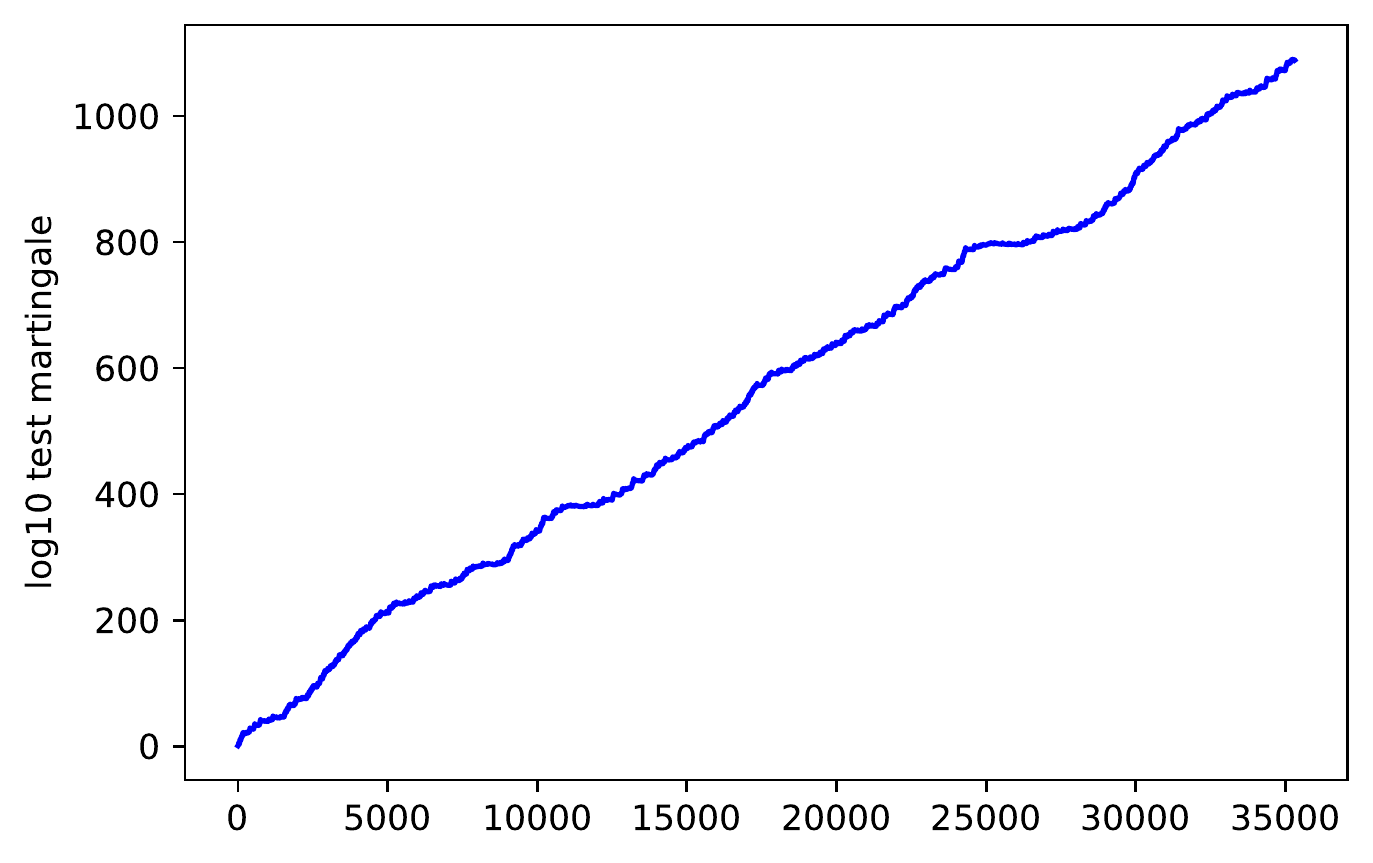}
    \includegraphics[width=0.48\textwidth]{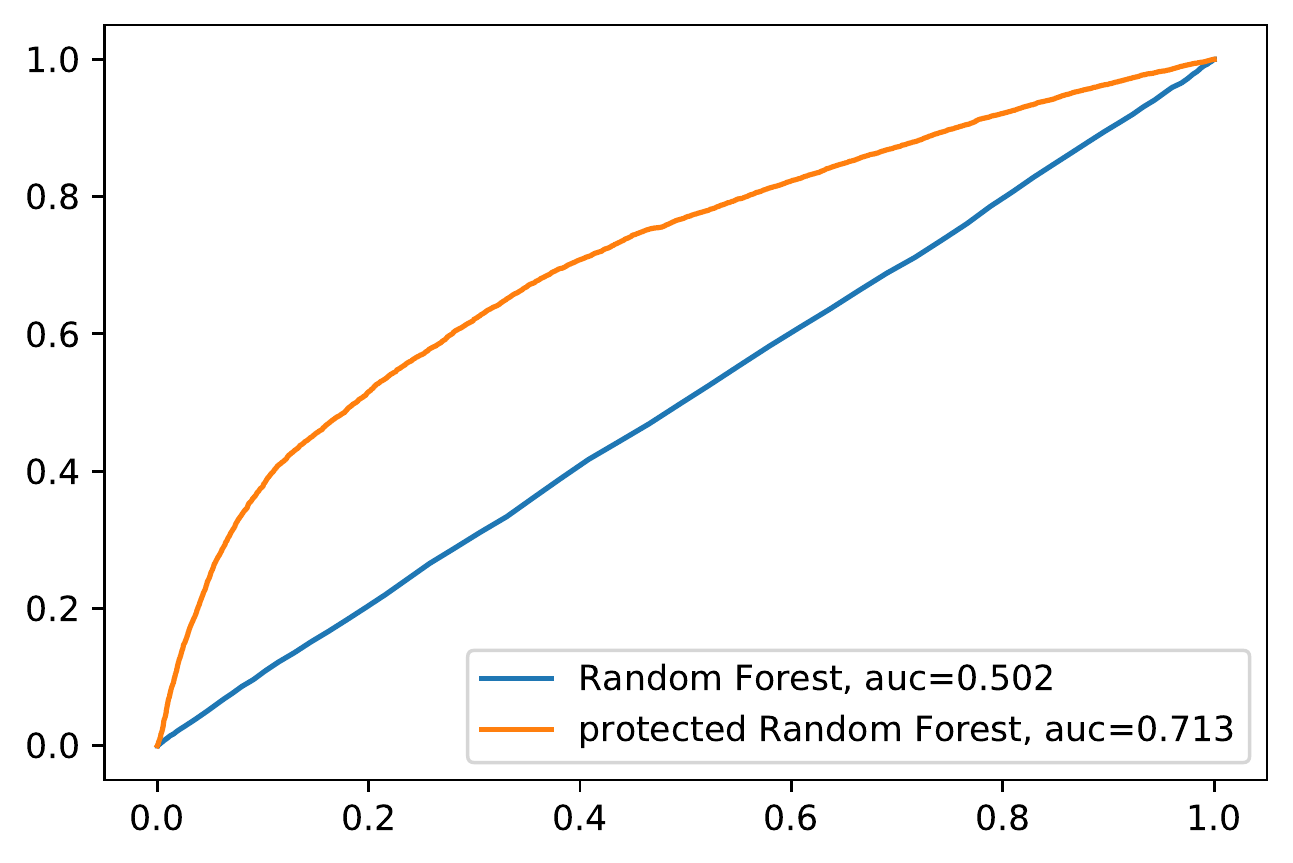}
  \end{center}
  \caption{The analogue of Figure~\ref{fig:el_Cox} for the \texttt{electricity} dataset
    with randomly permuted objects.}
  \label{fig:el_Cox_perm}
\end{figure}

Next we discuss some of the reasons for a good performance of our protection procedures
on the \texttt{Bank Marketing} and \texttt{electricity} datasets.
It is revealing that protection still gives good results for both datasets
when the objects are randomly shuffled (so that they become uninformative).
See Figures~\ref{fig:BM_Cox_perm} and~\ref{fig:el_Cox_perm}.
Therefore, already the order of the test labels is informative.
Of course, the ROC curves for the unprotected procedures
become trivial (close to the main diagonal) when the objects are shuffled.

\begin{figure}
  \begin{center}
    \includegraphics[width=0.48\textwidth]{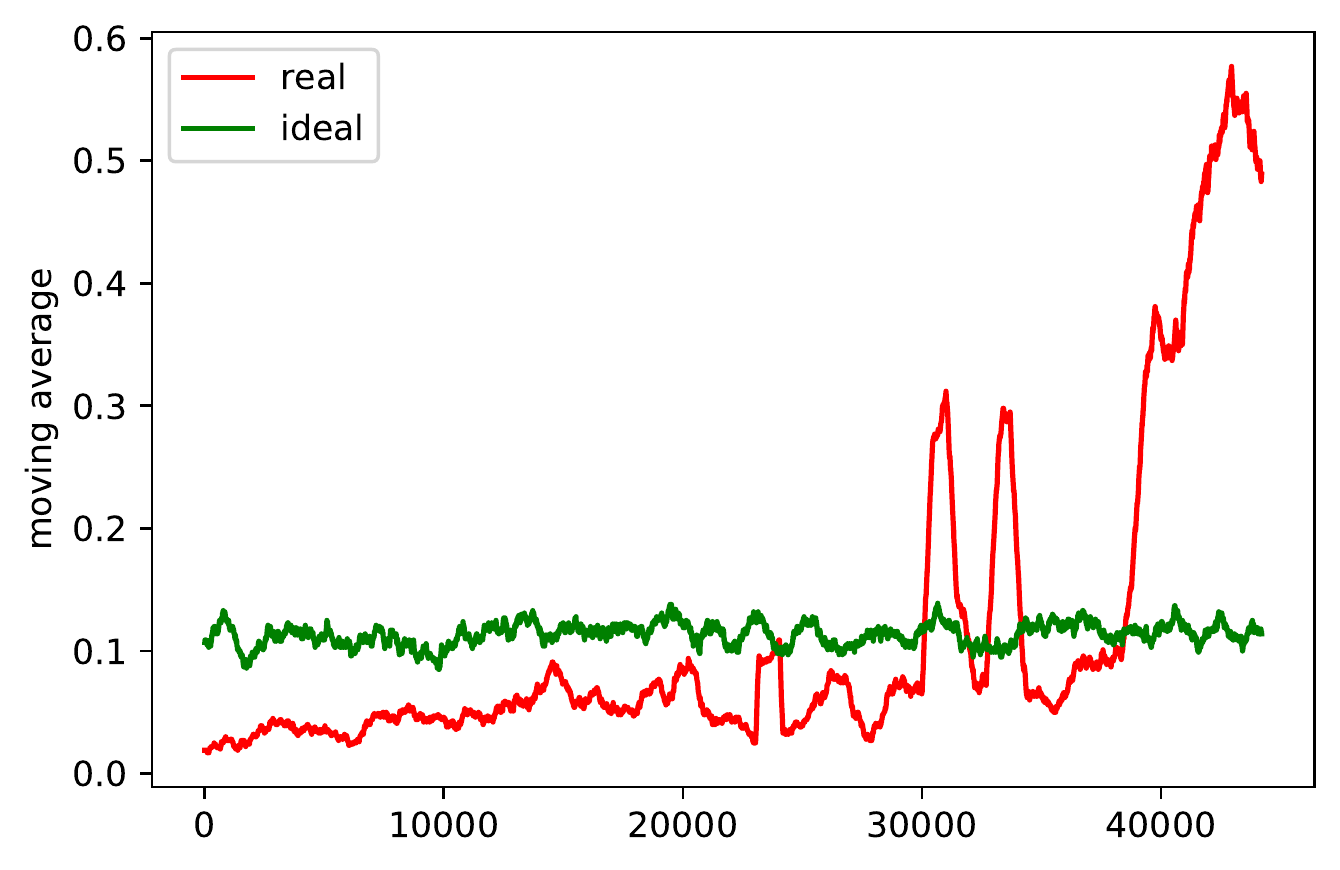}
    \includegraphics[width=0.48\textwidth]{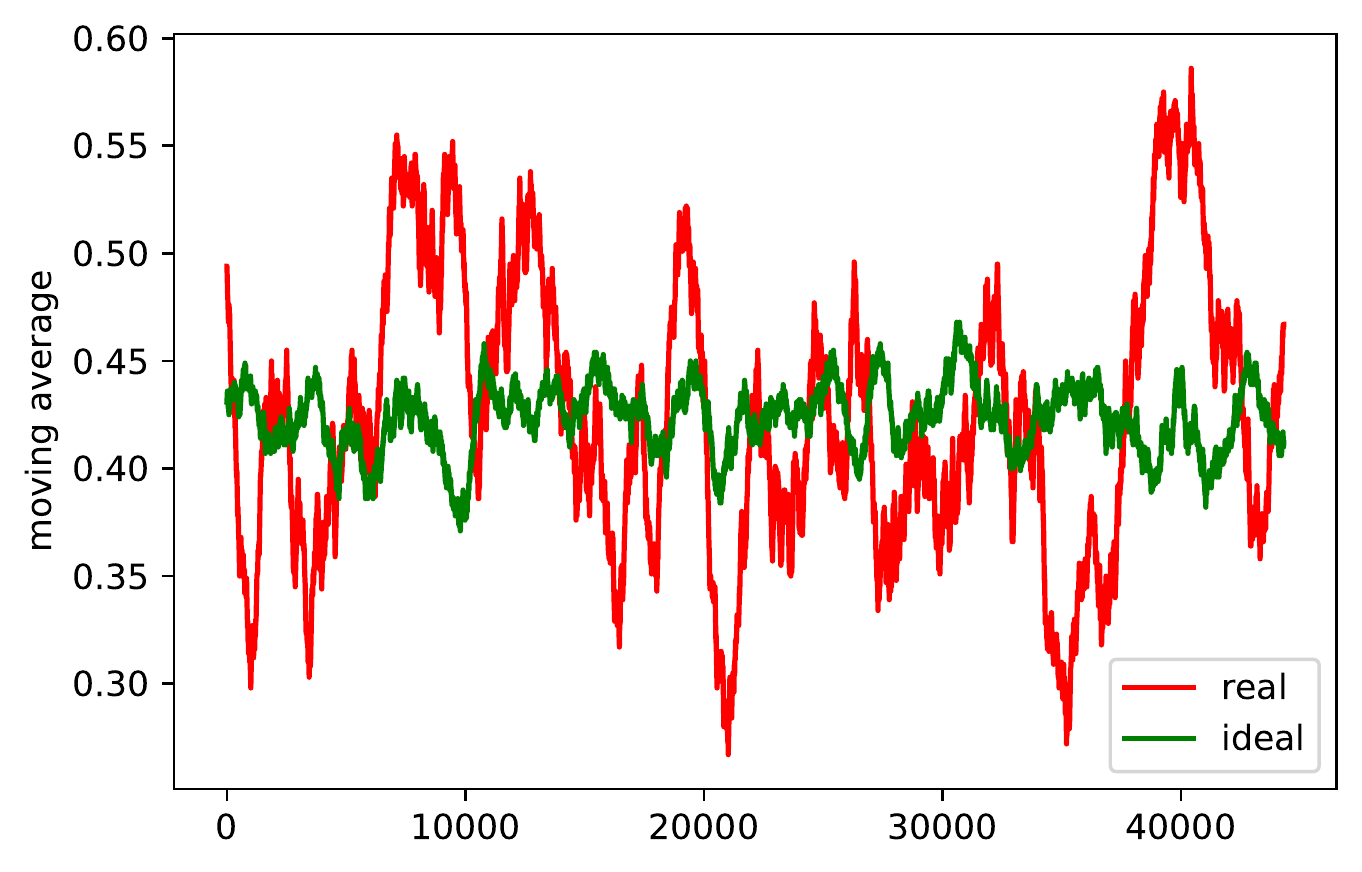}
  \end{center}
  \caption{The moving averages of the labels
    for the \texttt{Bank Marketing} (left panel) and \texttt{electricity} (right panel) datasets,
    as described in text.}
  \label{fig:MA}
\end{figure}

To understand the reasons for the ROC curves being non-trivial after protection in
Figures~\ref{fig:BM_Cox_perm} and~\ref{fig:el_Cox_perm},
we compute the moving averages of the labels for the two datasets (including both training and test sets).
Figure~\ref{fig:MA} shows in red the trajectories of the moving averages of the labels:
the value of each trajectory at time $n$ is the arithmetic mean of the 1000 consecutive labels starting from $y_n$
(namely, the arithmetic mean of $y_n,\dots,y_{n+999}$).
For comparison, the analogous moving average for a simulated IID binary sequence with the right percentage of 1s
(12\% for \texttt{Bank Marketing} and 42\% for \texttt{electricity})
is shown in green.

The behaviour of the moving average is particularly anomalous for \texttt{Bank Marketing}:
the proportion of successful calls increases drastically towards the end of the dataset,
which explains the quick growth of the Composite Jumper martingale in
Figures~\ref{fig:BM_Cox} and~\ref{fig:BM_Cox_perm} starting from approximately the 30,000th observation in the test set,
which corresponds to the 40,000th observation in the full dataset.
The percentage of successful calls is 3.5\% for the training set and 14.0\% for the test set.
The behaviour for \texttt{electricity} is less anomalous,
but there are still clear non-random patterns.

\subsection*{Dependence on the jumping rate}

\begin{figure}
  \begin{center}
    \includegraphics[width=0.48\textwidth]{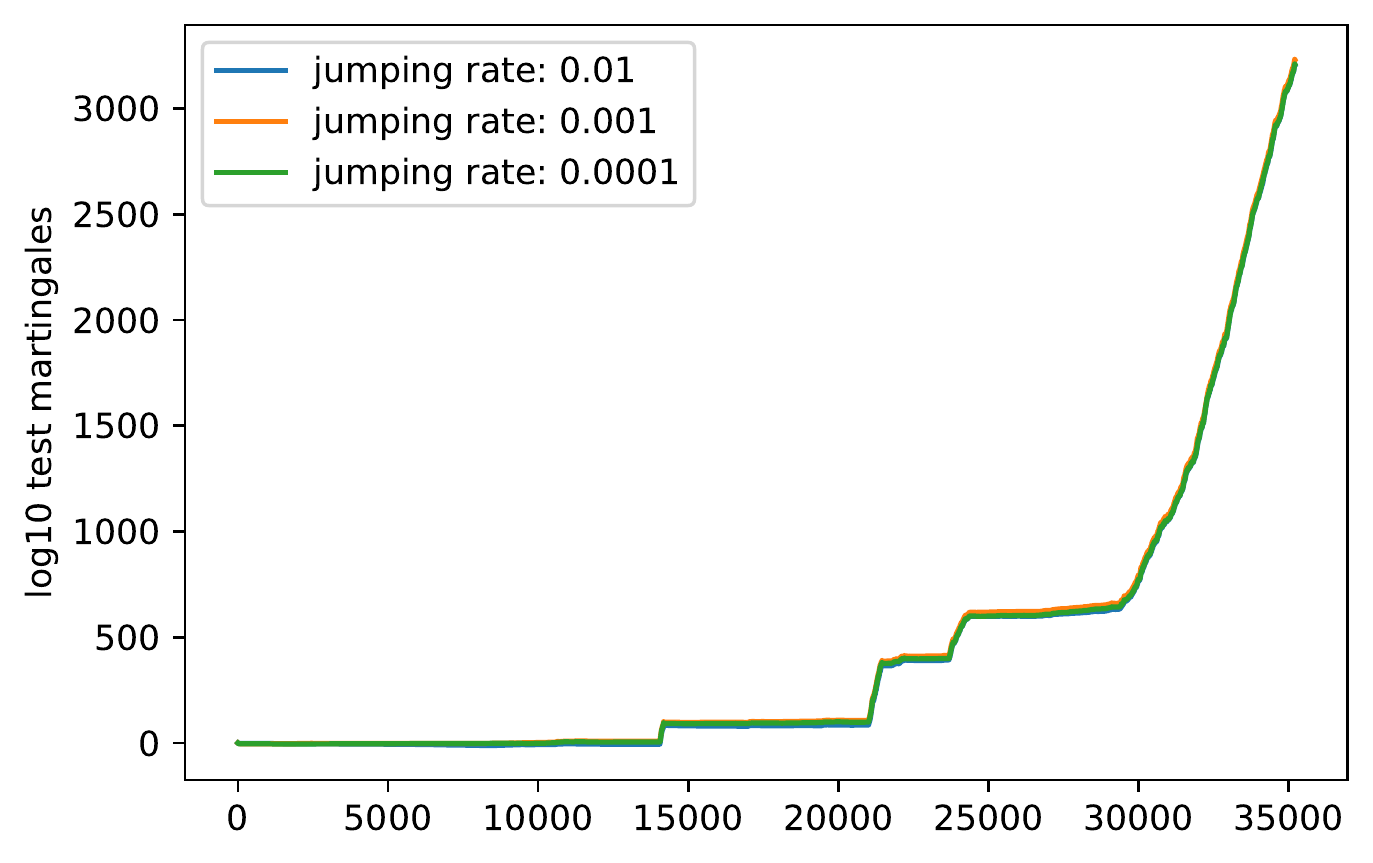}
    \includegraphics[width=0.48\textwidth]{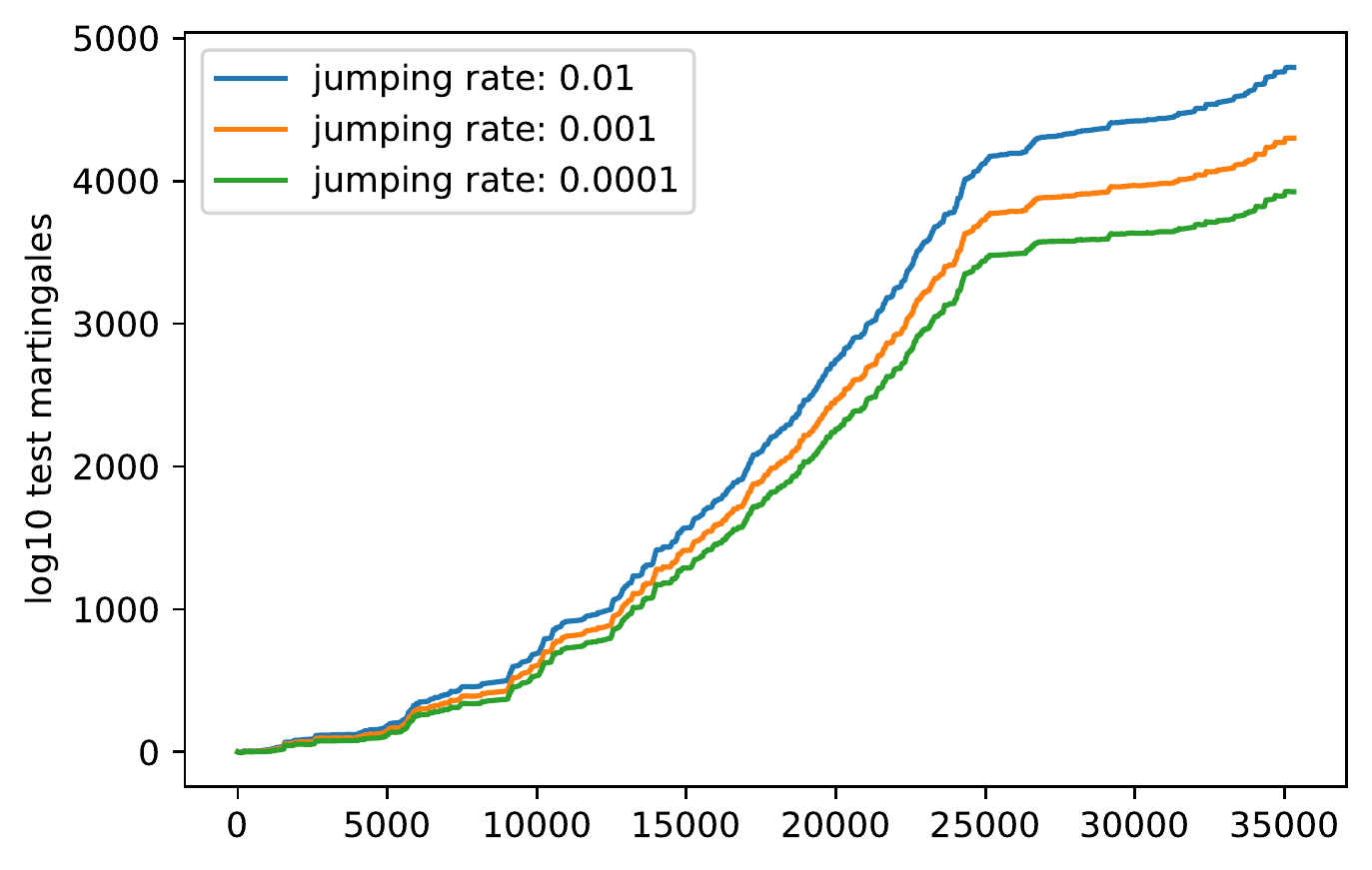}
  \end{center}
  \caption{The Simple Jumper martingales for various jumping rates
    for the \texttt{Bank Marketing} (left panel) and \texttt{electricity} (right panel) datasets,
    as described in text.}
  \label{fig:Cox_SJs}
\end{figure}

The left panel of Figure~\ref{fig:Cox_SJs} shows the dependence of the Simple Jumper martingale
(with the same parameters as in the left panel of Figure~\ref{fig:BM_Cox})
on the jumping rate for the \texttt{Bank Marketing} dataset;
the dependence is slight, at least on the log scale.
The right panel is its counterpart for the \texttt{electricity} dataset;
the dependence on the jumping rate is more noticeable.

\begin{figure}
  \begin{center}
    \includegraphics[width=0.48\textwidth]{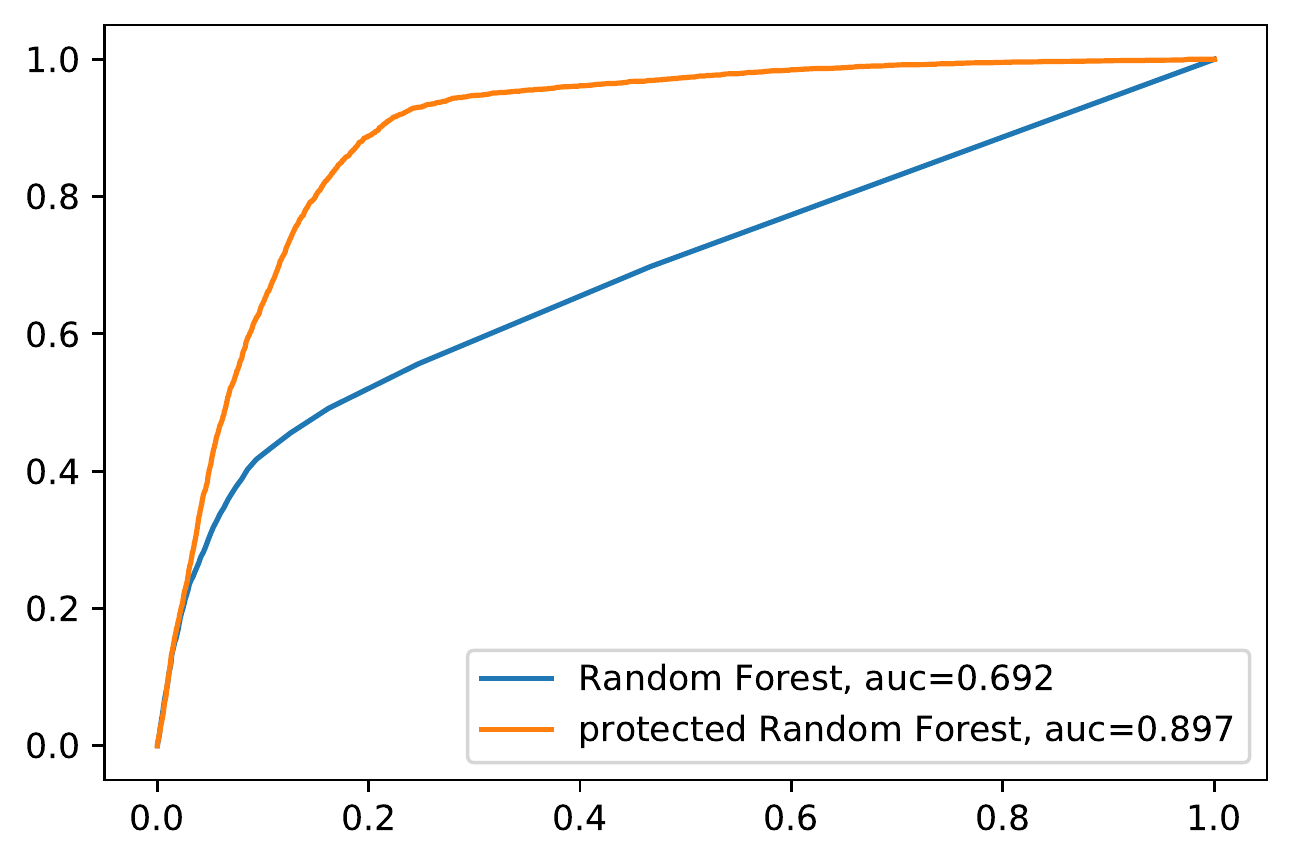}
    \includegraphics[width=0.48\textwidth]{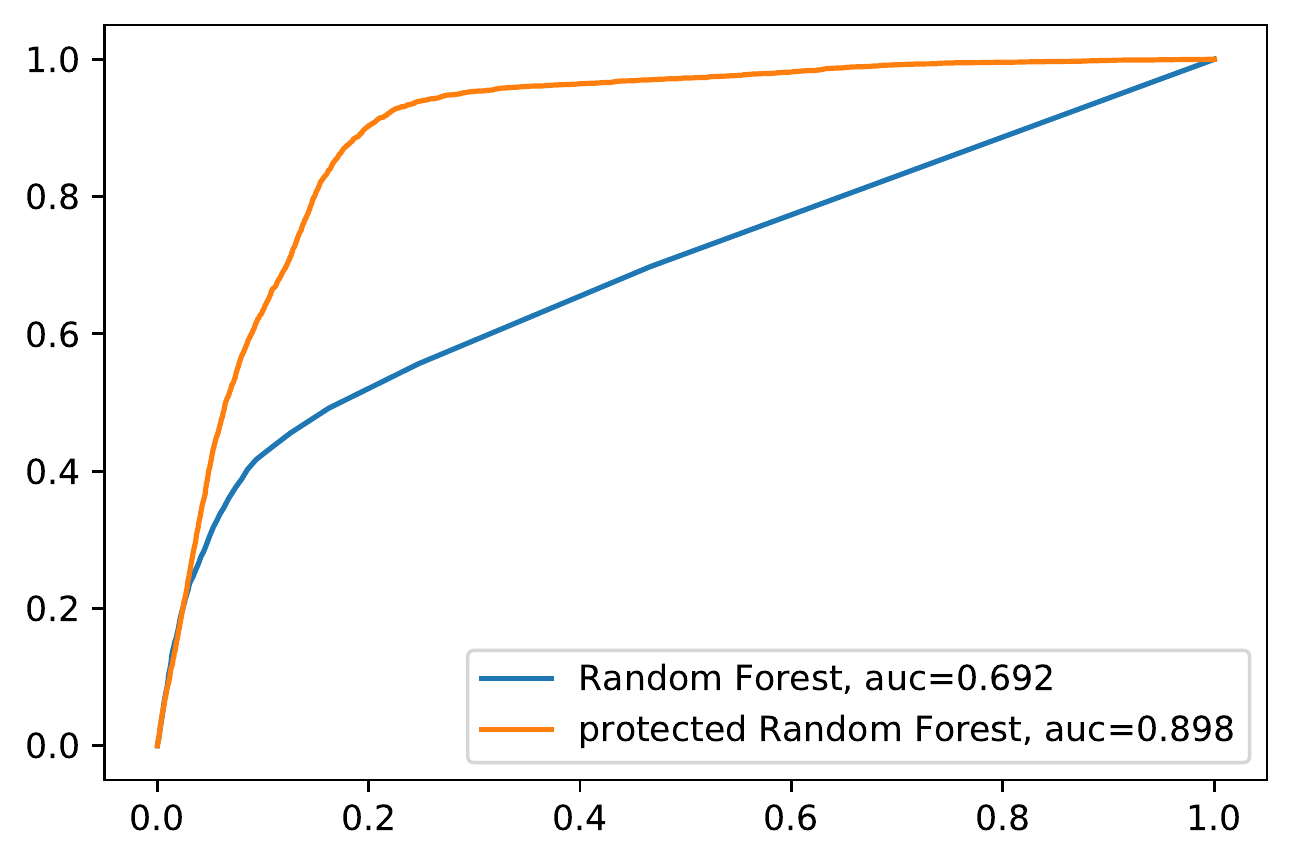}
  \end{center}
  \caption{The ROC curves for the \texttt{Bank Marketing} dataset
    and Random Forest with the Simple Jumper protection
    for the jumping rate $J:=10^{-2}$ on the left and $J:=10^{-4}$ on the right.}
  \label{fig:BM_ROC_alt}
\end{figure}

The dependence of the resulting ROC curves on the jumping rate is also weak:
see, e.g., Figure~\ref{fig:BM_ROC_alt}, which shows results for the jumping rates $0.01$ and $0.0001$.
However, using a specific value of the jumping rate $J$ may be risky in that the Simple Jumper test martingale
loses capital exponentially quickly if the base prediction algorithm is already ideal;
this will make the insurance policy discussed in Section~\ref{sec:introduction} expensive
both for testing and for prediction.
A safer option is to use the Composite Jumper procedures,
as we do in the main paper.

\begin{figure}
  \begin{center}
    \includegraphics[width=0.48\textwidth]{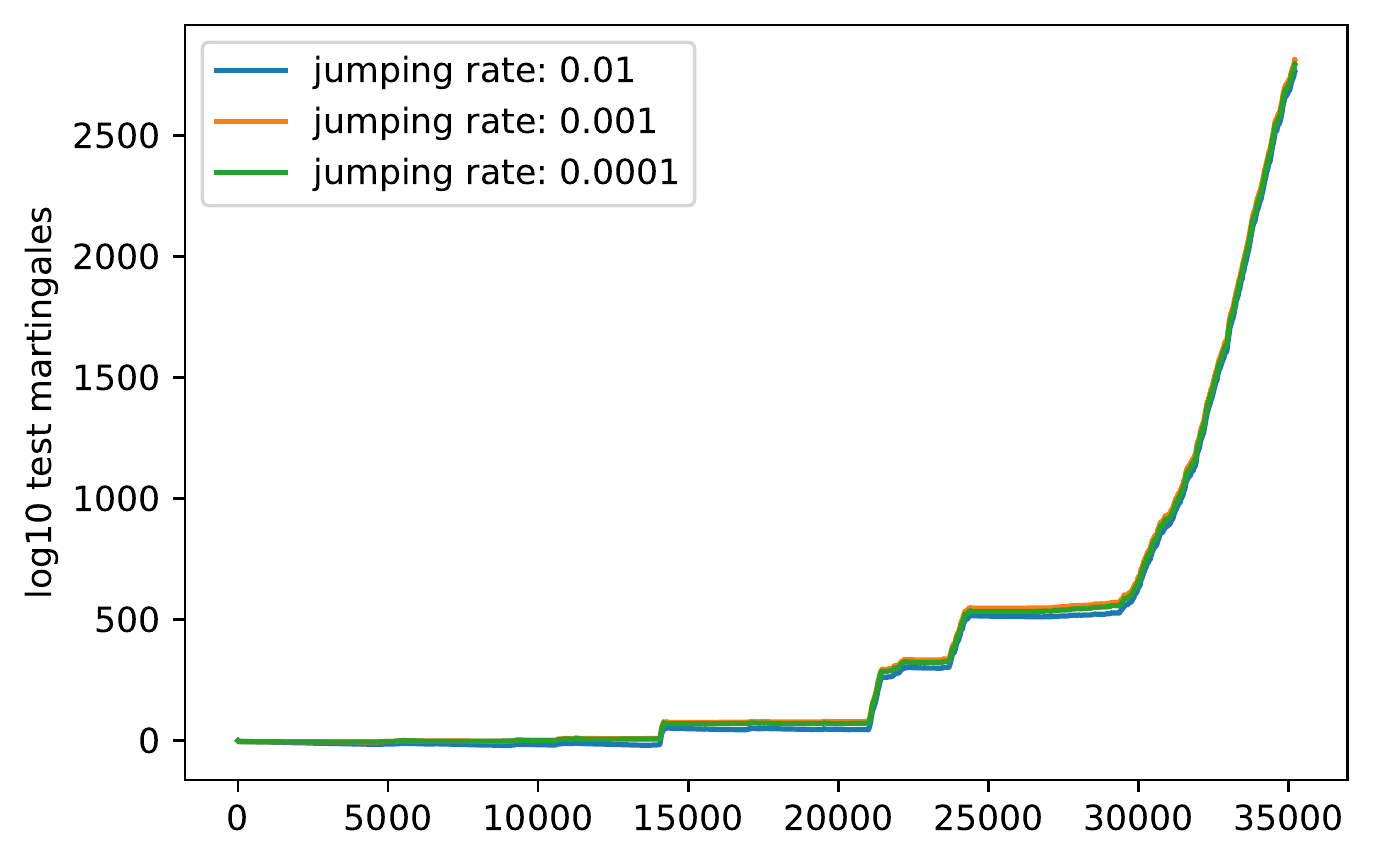}
    \includegraphics[width=0.48\textwidth]{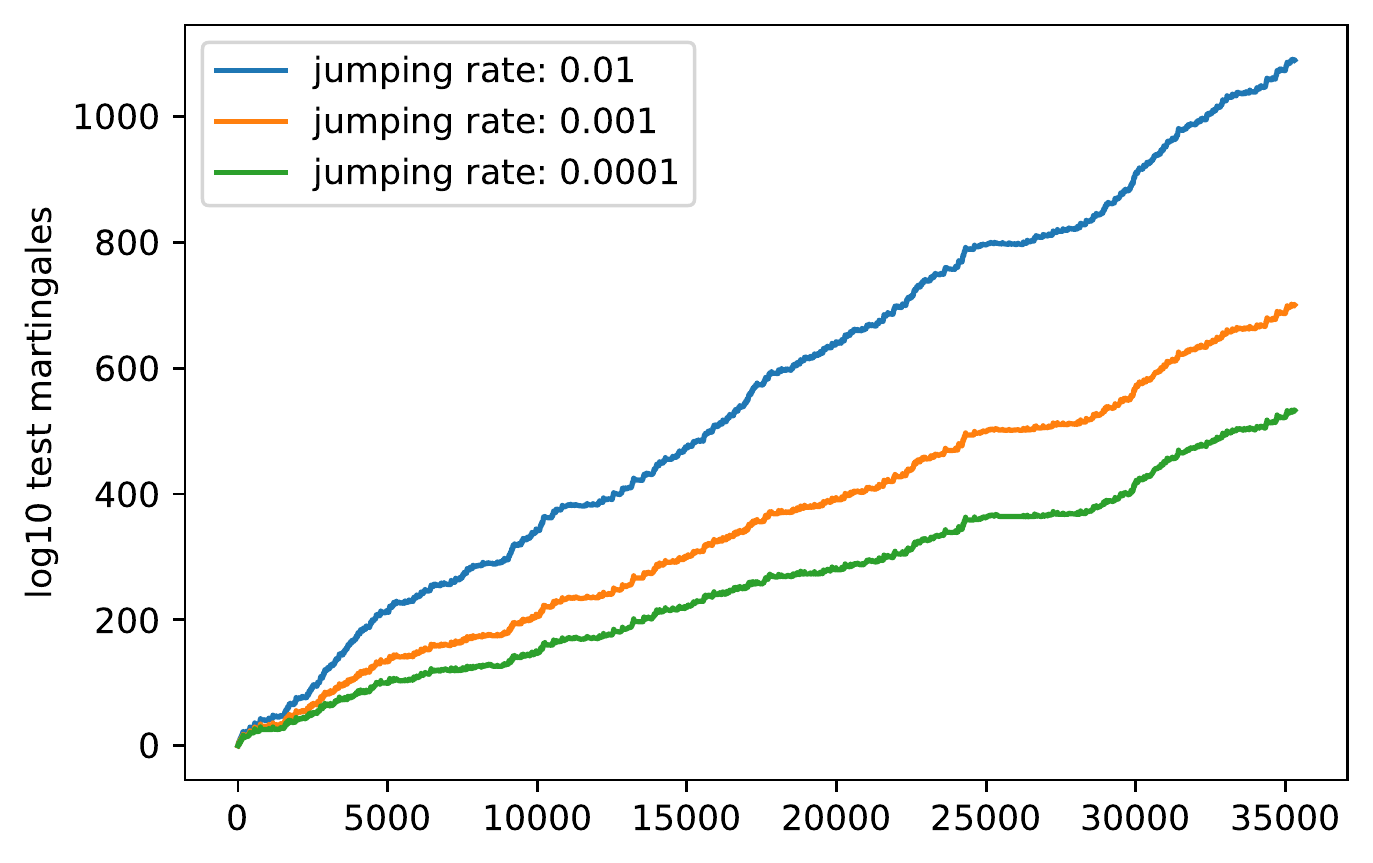}
  \end{center}
  \caption{The Simple Jumper martingales for the \texttt{Bank Marketing} and \texttt{electricity} dataset
    with randomly permuted objects.}
  \label{fig:SJs_perm}
\end{figure}

Figure~\ref{fig:SJs_perm} showing the Simple Jumper martingales for various jumping rates
is interesting because of the striking difference
between the \texttt{Bank Marketing} and \texttt{electricity} datasets
with randomly permuted objects.

\subsection*{Limited feedback}

\begin{figure}
  \begin{center}
    \includegraphics[width=0.48\textwidth]{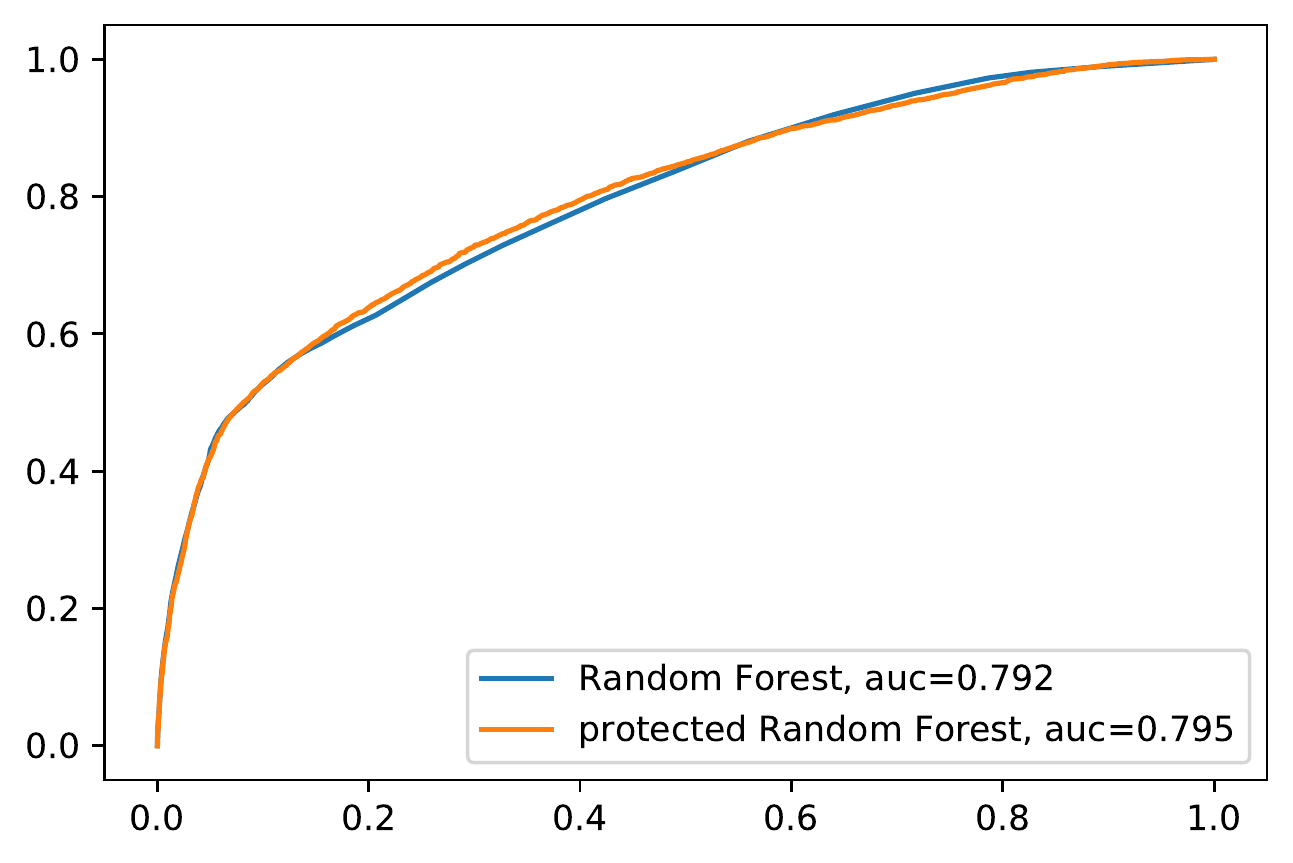}
    \includegraphics[width=0.48\textwidth]{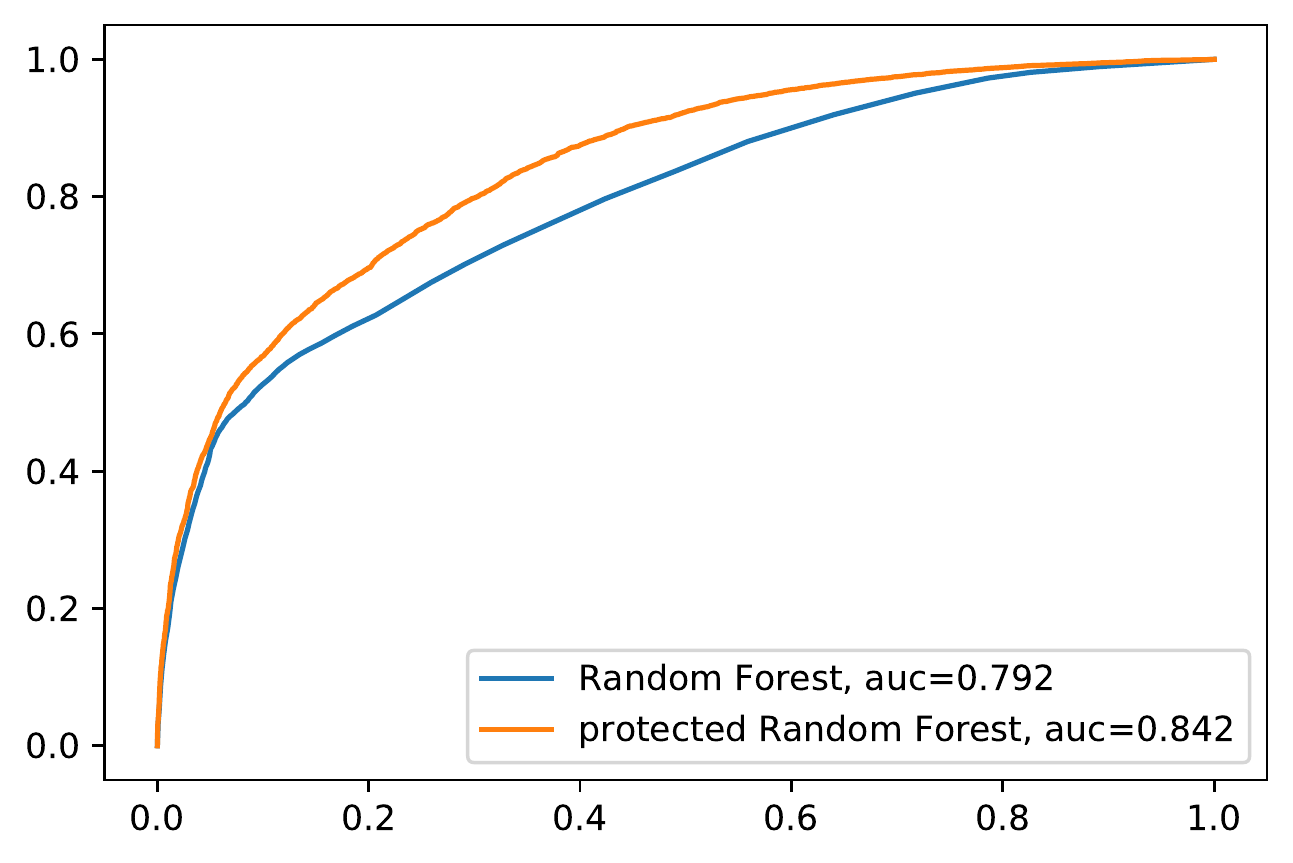}
  \end{center}
  \caption{Left panel:
    The ROC curve for the \texttt{electricity} dataset
    and Random Forest with the Composite Jumper protection
    and feedback provided for every 100th test observation.
    Right panel:
    the counterpart of the left panel for feedback provided for every 10th observations.}
  \label{fig:el_Cox_ROC_lim}
\end{figure}

The left panel of Figure~\ref{fig:el_Cox_ROC_lim}
is the counterpart for the \texttt{electricity} dataset of Figure~\ref{fig:BM_Cox_ROC_lim}:
feedback is provided only for every 100th test observation.
Now protection with limited feedback results only in marginal improvement in the AUC for the ROC curve.
With the fuller feedback comprising every 10th test observation
the improvement is again substantial: see the right panel of Figure~\ref{fig:el_Cox_ROC_lim}.

\subsection*{The time-series aspects of the \texttt{electricity} dataset}

\begin{figure}
  \begin{center}
    \includegraphics[width=0.48\textwidth]{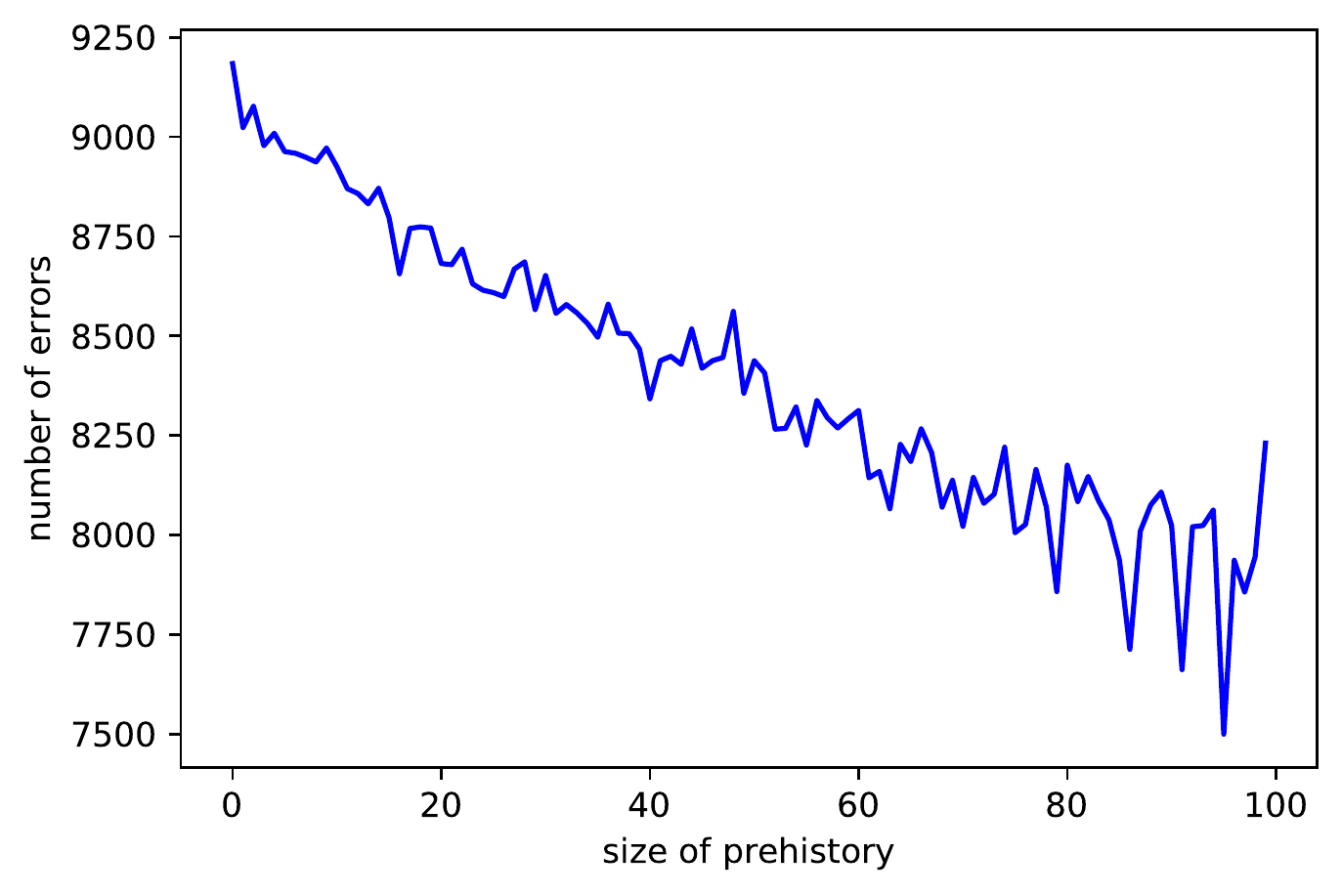}
    \includegraphics[width=0.48\textwidth]{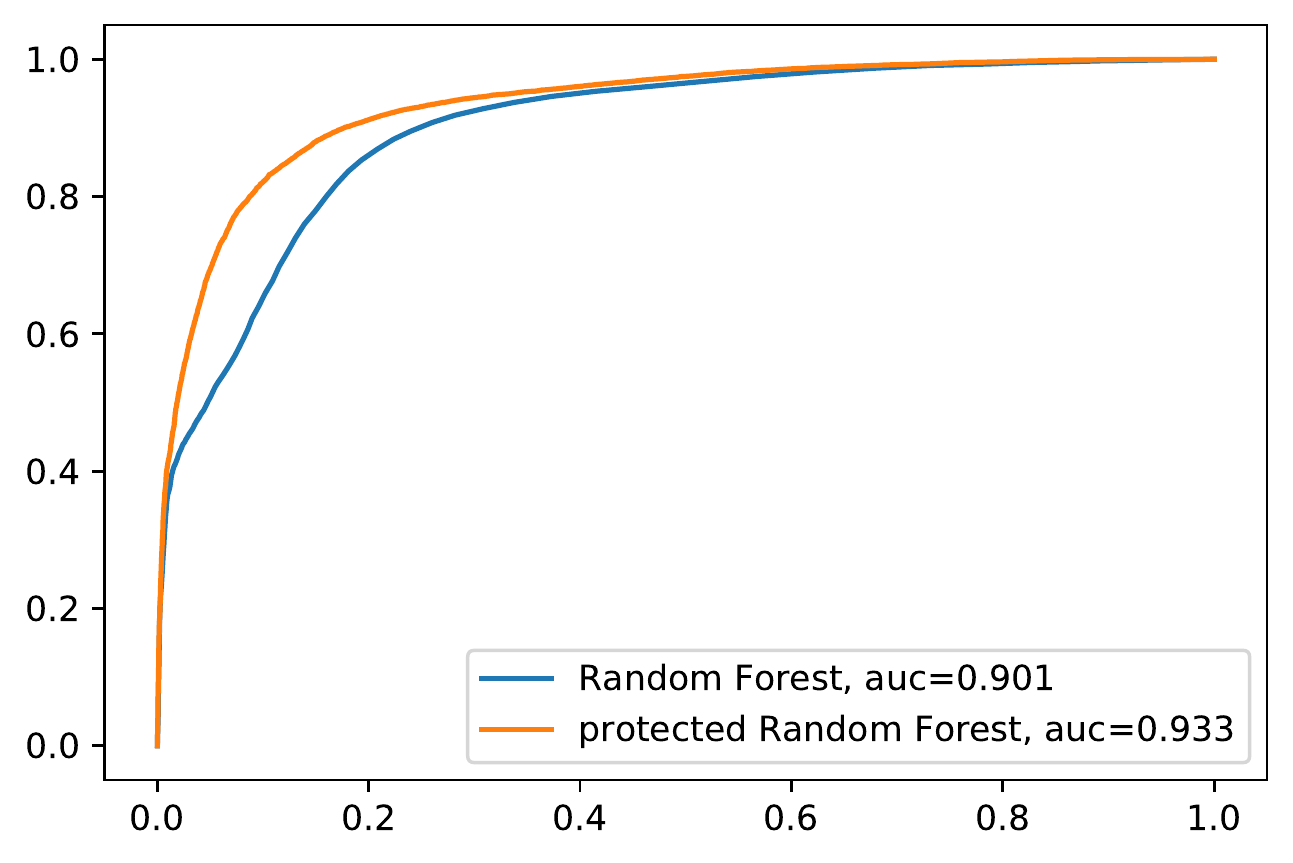}
  \end{center}
  \caption{Left panel:
    The number of errors made by Random Forest on the \texttt{electricity} dataset
    with added prehistory.
    Right panel:
    The ROC curve for the \texttt{electricity} dataset with prehistory of 48
    and Random Forest with the Composite Jumper protection.}
  \label{fig:el_prehistory}
\end{figure}

Unlike the two other datasets considered in this paper,
the \texttt{electricity} dataset consists of periodic observations
referring to a period of 30 minutes, so that there are 48 instances for each time period of one day.
Therefore, complementing the existing attributes of each observation by a prehistory,
i.e., the labels of a given number (the \emph{size of prehistory}) of immediately preceding observations,
may improve the predictions.
The left panel of Figure~\ref{fig:el_prehistory} shows that this is indeed the case.
The right panel of Figure~\ref{fig:el_prehistory}
shows the improvement in the ROC curve for the prehistory of size 48 (one day).

\subsection*{Multiclass case}

\begin{figure}
  \begin{center}
    \includegraphics[width=0.6\textwidth]{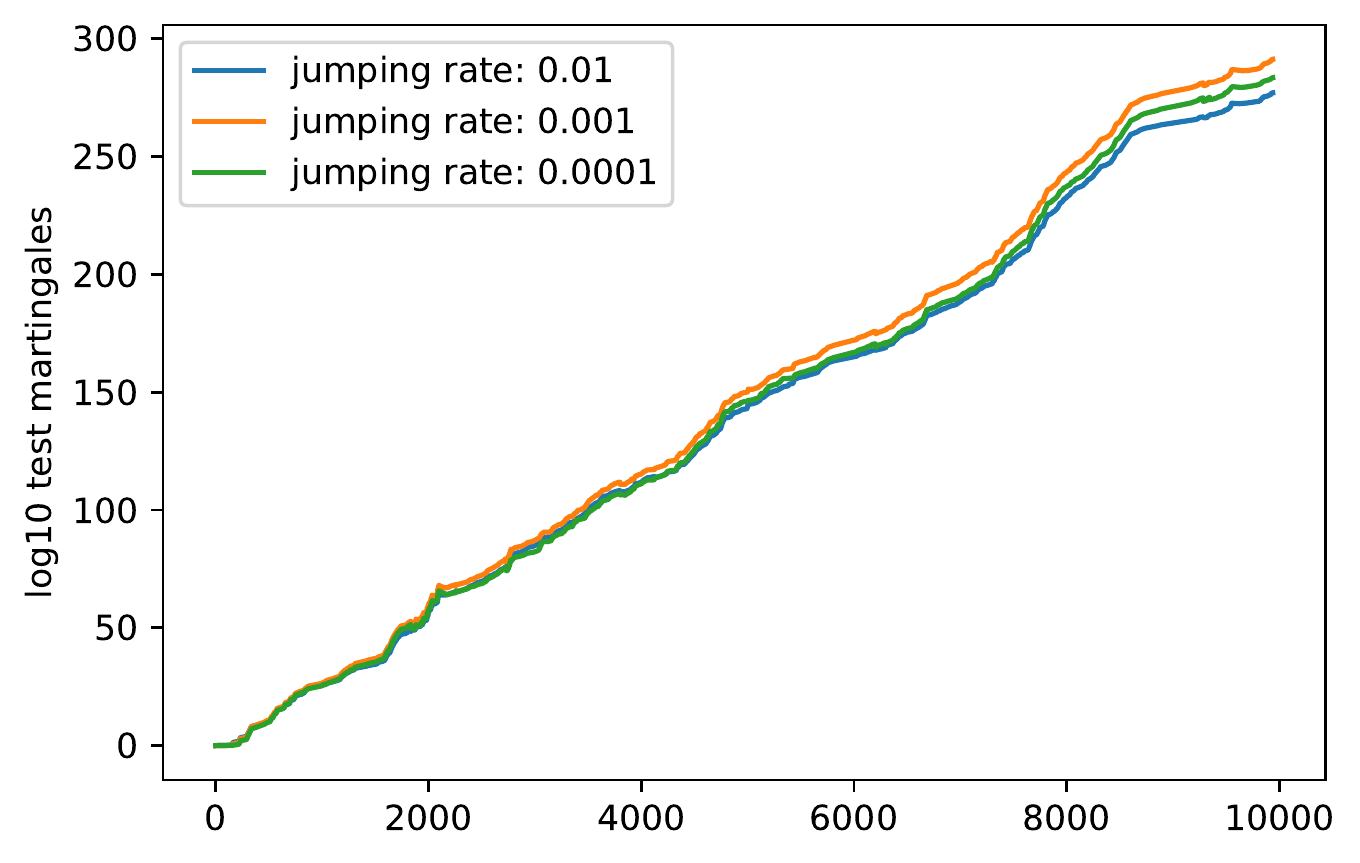}
 \end{center}
  \caption{The Simple Jumper martingales for the \texttt{UJIIndoorLoc} dataset (Scenario 1) and Random Forest.}
  \label{fig:UJI_Cox_SJs}
\end{figure}

Figure~\ref{fig:UJI_Cox_SJs} gives the trajectories of the three components of the Composite Jumper martingale
with $\mathbf{J}=\{10^{-2},10^{-3},10^{-4}\}$ and $\Theta=(\{0,1\}^3\setminus\{(1,1,1)\})\times\{0.5,1,2\}$ (as before)
based on Random Forest for the \texttt{UJIIndoorLoc} dataset in Scenario 1.
On the log scale, the three components do not appear vastly different,
but the final value for the jumping rate $0.001$ is more than $10^6$ times larger
than the final value for the jumping rate $0.01$.
The trajectory (not shown) for the Composite Jumper martingale with these jumping rates and $\pi=0.5$
is visually indistinguishable from the highest of the three trajectories that are shown
(the one for the jumping rate $0.001$).
\end{document}